\lstdefinestyle{pytiny}{
  language=Python,
  basicstyle=\ttfamily\small,
  keywordstyle=\color{blue!60!black}\bfseries,
  commentstyle=\color{gray!70},
  stringstyle=\color{green!40!black},
  showstringspaces=false,
  frame=single,
  framerule=0.3pt,
  rulecolor=\color{black!20},
  breaklines=true,
  tabsize=2
}
\tikzset{
  >=Latex,
  blk/.style={
    rounded corners=3pt,
    draw=black,
    very thick,
    minimum width=40mm,
    minimum height=9mm,
    align=center
  },
  note/.style={font=\small}
}
\definecolor{cin}{HTML}{F2B8C6}
\definecolor{cfn}{HTML}{F8E1A1}
\definecolor{csu}{HTML}{CBD9FF}
\definecolor{cgvt}{HTML}{E8D7FF}
\definecolor{cconv}{HTML}{BFE3F0}
\definecolor{ccomb}{HTML}{DDEFC5}
\theoremstyle{plain}
\newtheorem{proposition}{Proposition}
\theoremstyle{definition}
\newtheorem{definition}{Definition}
\newcommand{\diag}{\operatorname{diag}}   
\newtheorem{theorem}{Theorem}
\theoremstyle{plain}
\theoremstyle{definition}
\theoremstyle{remark}
\newtheorem{remark}[theorem]{Remark}
\begin{document}
%
\title{Graph Variate Neural Networks}
%
%
%
\author{Om~Roy,
        Yashar~Moshfeghi,
        and~Keith~Malcolm~Smith
\thanks{O. Roy, Y. Moshfeghi, and K. M. Smith are with the Department
of Computer and Information Sciences, University of Strathclyde, Glasgow,
G1 1XH, UK, and also affiliated with DeepBrain 
(e-mail: om.roy@deep-brain.tech;.Om Roy is supported by the Engineering and Physical Sciences Research Council (EPSRC) Student Excellence Award (SEA) Studentship provided by the United Kingdom Research and Innovation (UKRI) council, RTSG Grant Number: 12212S220211-116.}
}

\maketitle

\begin{abstract}
  Modeling dynamically evolving spatio-temporal signals is a prominent challenge in the Graph Neural Network (GNN) literature. Notably, GNNs assume an existing underlying graph structure. While this underlying structure may not always exist or is derived independently from the signal, a temporally evolving \textit{functional} network can always be constructed from multi-channel data. Graph Variate Signal Analysis (GVSA) defines a unified framework consisting of a network tensor of instantaneous connectivity profiles against a stable support usually constructed from the signal itself. Building on Graph-Variate Signal Analysis (GVSA) and tools from graph signal processing, we introduce \textbf{Graph-Variate Neural Networks (GVNNs)}: layers that convolve spatio-temporal signals with a signal-dependent connectivity tensor combining a stable long-term support with instantaneous, data-driven interactions. This design captures dynamic statistical interdependencies at each time step without ad-hoc sliding windows and admits an efficient implementation with linear complexity in sequence length. Across forecasting benchmarks, GVNNs consistently outperform strong graph-based baselines and are competitive with widely used sequence models such as LSTMs and Transformers. On EEG motor-imagery classification, GVNNs achieve strong accuracy highlighting their potential for brain–computer interface applications.
\end{abstract}

\begin{IEEEkeywords}
Graph neural networks, graph signal processing, spatio-temporal modeling, graph variate signal analysis
\end{IEEEkeywords}

%
\IEEEpeerreviewmaketitle

\section{Introduction}

The modeling of graph signals has been a pervasive topic in recent years in Graph Signal Processing (GSP) and Graph Neural Networks (GNN) \citep{Xu2019GIN,Kenlay2020StabilityPolyFilters} with a lack of a general consensus of the best underlying graph structure for modeling \citep{Ortega2018GSP,Scarselli2008GNN,Ruiz2021GNNOverview}. Often, this structure is unrelated to the graph signal itself (for example, geometric graphs for traffic signals). CoVariance Neural Networks (VNN) propose the use of the sample covariance matrix as the underlying graph shift operator (GSO)\citep{Sihag2022CoVariance}. This approach encodes pairwise relationships in a robust statistical object. Yet, while this represents relevant interactions in a \textit{static} case this does not necessarily hold when time-evolving graph signals are being modeled \citep{Li2016NonstationarySphere}.

Graph temporal convolutional neural networks (GTCNN) \citep{Isufi2021GTCNN,Sabbaqi2023GTCNN}  are a notable development in the spatio-temporal modeling of dynamically evolving graph signals. This class of models typically constructs a fully connected Cartesian or Kronecker product graph. While this effectively captures instantaneous interactions, convolutions in this domain result in a computational complexity that is quadratic in time, thus infeasible for longer time-series \citep{Leskovec2010Kronecker}. 

Given a time-evolving multi-variate signal the sample covariance represents the \textit{long-term} correlation between variables over the entire time period. However, each snap shot in time has varying \textit{instantaneous} interactions\citep{Roy2024FAST}. This difference is in fact, non-trivial. While approaches like temporal PCA \citep{Scharf2022TPCA} perform projections over the time averaged sample covariance matrix, this aggregation loses potentially useful information. This is demonstrated by the development of the time-varying graphical lasso \citep{Hallac2017TVGL}, an optimization framework that estimates a dynamic inverse covariance matrix directly from time series data. While this approach is intuitive and useful, the large computational cost of solving such an optimization problem has limited the use of this approach in neural network architectures \citep{Hamilton2017RepLearning}.

Graph Variate Signal Analysis (GVSA)\citep{Smith2019Gvsa} provides an extended general framework to GSP for the analysis of spatio-temporal signals, using general instantaneous pairwise node functions (unrestricted by matrix multiplication) to formulate data constructed dynamic graph structures. This framework motivates methods such Graph-Variate Dynamic Connectivity and FAST Functional Connectivity, where these instantaneous graphs are filtered by a stable support constructed from the long term signal coupling information of the signal itself (GVDC) or a global cohort (FAST), reducing noise in short temporal windows while providing a very high, sample by sample, temporal resolution which does not rely on a window length compared to traditional sliding window approaches.

In this work we integrate GVSA with the more traditional "convolution" aggregation found in modern Graph Neural Networks (GNNs) \citep{Li2016GGNN,Abadal2021GNNAccelSurvey,Pfrommer2021Discriminability,Isufi2024GraphFiltersTSP,Velickovic2018GAT}. For each input into the network an instantaneous connectivity tensor against a stable (and potentially learnt) support is constructed. This tensor is multiplied with its respective signal vector, this results in the capturing of spatio-temporal functional interactions. With this, we derive two important theoretical insights. Firstly, we show that while instantaneous connectivity matrices are typically rank-deficient and non-invertible, Hadamard multiplication with a full-rank stable support remedies this. Furthermore, we show that by using parallelized batch processing and low-rank matrix construction we achieve a speed up resulting in a linear time-complexity. This allows, for the first time, the capture of sample resolution signal dependent connectivity in a efficient, scalable manner.

We evaluate GVNN forecasting performance in 3 chaotic maps,2 weather forecasting tasks and 2 EEG motor imagery tasks. GVNNs successfully capture the non-trivial instantaneous temporal interactions present in multi-variable time-series. Particularly, we show that it outperforms the state of the art conventional graph based methods for time-series. Showing that the inductive bias provided by GVNNs improve performance. In application, we study EEG motor imagery classification, demonstrating that GVNNs capture the high temporal resolution of EEG signals while effectively reducing noise outperforming approaches such as EEGNet\citep{Lawhern2016EEGNet} and the Transformer model. Our results indicate that GVNNs could play a pivotal role in advancing the next generation of  Brain–Computer Interfaces (BCIs)\citep{MOABB2023,Keutayeva2024CCT,Zhang2018BCICGAN}, where minimizing calibration time and maximizing online responsiveness are crucial engineering challenges\citep{Bessadok2021GNNNetworkNeuro}.

\section{Related Work}
\subsection{Graph Neural Networks}
\begin{figure*}[t]
  \centering
  \includegraphics[width=\textwidth]{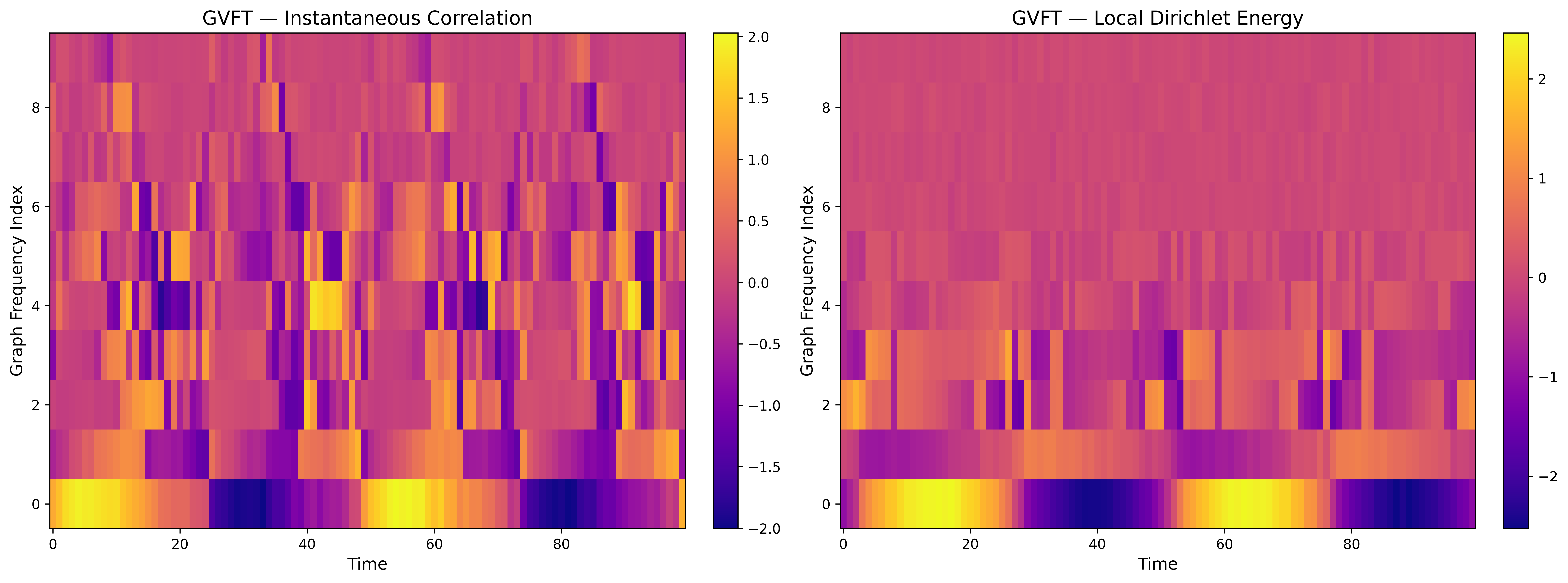}
  \caption{%
    \textbf{Graph Variate Fourier Transform (GVFT).} Each panel shows the GVFT coefficients of a synthetic multivariate time series projected onto the eigenbasis of its own graph‐structured connectivity profile at each time step. The left heatmap uses a squared‐difference formulation for 
    $\Omega_{t} = (x_{i} - x_{j})^{2} \cdot C$, while the right uses instantaneous correlation: 
    $\Omega_{t} = \mathrm{corr}(x_{t}) \cdot C$, where $C$ is the long‐term correlation matrix across the full signal. The GVFT transforms the input signal $X \in \mathbb{R}^{N \times T}$ into a new matrix 
    $\hat{X} \in \mathbb{R}^{N \times T}$, where each column represents the projection of $x_{t}$ onto the eigenbasis of $\Omega_{t}$. This figure illustrates how different formulations of signal‐derived connectivity affect the spectral content and dynamics of the transformed signal.%
  }
  \label{fig:gvft_side_by_side}
\end{figure*}
Graph Signal Processing (GSP) extends classical signal processing to data indexed by the vertices of a graph. A key component is the Graph Shift Operator (GSO), whose eigen-decomposition underlies operations analogous to the Discrete Fourier Transform (DFT). These components are the foundation on which Graph Neural Networks are built \citep{Isufi2024GraphFiltersTSP,Maskey2023GraphonTransfer,Levie2020TransferabilitySpectral}.

\begin{definition}[Graph Convolutional Filter]
\label{gfilter}
Let $\mathbf{h} = [h_0, \ldots, h_K]^\top$ be filter coefficients.  
A graph convolutional filter of order $K$ is the linear map
\begin{equation}
\mathbf{H}(\mathbf{S})\,\mathbf{x} 
= \sum_{k=0}^K h_k \,\mathbf{S}^k \mathbf{x} 
= H(\mathbf{S})\,\mathbf{x},
\end{equation}
where $\mathbf{H}(\mathbf{S}) = \sum_{k=0}^K h_k \mathbf{S}^k$.
\end{definition}

\begin{definition}[Graph Fourier Transform (GFT)]
For a diagonalizable GSO $\textbf{S} =\textbf{ V} \Lambda \textbf{V}^{-1}$ with eigenvectors $V$ and eigenvalues $\Lambda$, the GFT of a graph signal $\textbf{x}$ is 
\(
\tilde{\textbf{x}} = \textbf{V}^{-1} \textbf{x},
\)
and the inverse GFT is
\(
\textbf{x }= \textbf{V}\,\tilde{\textbf{x}}.
\)
\end{definition}

\begin{definition}[Graph Convolutional Network (GCN)]

A Graph Convolutional Network \citep{Sandryhaila2013DSPG,Zugner2019RobustGCN,Keriven2019UniversalInvariant,Hamilton2017RepLearning} layer updates a graph signal $\mathbf{X} \in \mathbb{R}^{N \times F}$ (with $N$ nodes and $F$ input features) as:
\begin{equation}
\mathbf{X}^{(\ell+1)} 
= \sigma\!\left(\mathbf{H}(\mathbf{S})\, \mathbf{X}^{(\ell)} \mathbf{W}^{(\ell)}\right),
\end{equation}
where $\mathbf{S}$ is the graph shift operator (GSO) of choice, 
$\mathbf{W}^{(\ell)} \in \mathbb{R}^{F_{\ell} \times F_{\ell+1}}$ are learnable weights, 
and $\sigma(\cdot)$ is a nonlinear activation function.  

\end{definition}

\subsection{Graph-Time Convolutional Neural Networks (GTCNNs).}

The standard way to model spatiotemporal signals is the use of product graphs to create Graph time Convolutional filters and thus GTCNN's.
\begin{definition}[Graph-Time Convolutional Neural Network (GTCNN) \citep{Isufi2021GTCNN,Sabbaqi2023GTCNN}]
Let $\mathcal{G}_{\mathcal{P}}=(\mathcal{V}_{\mathcal{P}},\mathcal{E}_{\mathcal{P}},\mathbf{S}_{\mathcal{P}})$ 
be a spatio\mbox{-}temporal product graph with shift operator 
$\mathbf{S}_{\mathcal{P}} \in \mathbb{R}^{NT \times NT}$.  
A spatio\mbox{-}temporal signal $\mathbf{X}\in\mathbb{R}^{N\times T}$ is vectorized as 
$\mathbf{x}_{\mathcal{P}} = \mathrm{vec}(\mathbf{X}) \in \mathbb{R}^{NT}$.  

The \emph{graph-time convolutional filter} of order $K$ is defined as
\begin{equation}
\mathbf{y}
= \Bigg(\sum_{k=0}^{K} h_k\, \mathbf{S}_{\mathcal{P}}^{\,k}\Bigg)\mathbf{x}_{\mathcal{P}}
\;\equiv\; \mathbf{H}(\mathbf{S}_{\mathcal{P}})\,\mathbf{x}_{\mathcal{P}},
\end{equation}
which aggregates information from $K$-hop spatio\mbox{-}temporal neighborhoods.  

For multiple features, let 
$\mathbf{X}_{\mathcal{P}}^{(\ell-1)} \in \mathbb{R}^{NT\times F_{\ell-1}}$ 
denote the input at layer $\ell-1$.  
We apply a bank of polynomial filters with coefficient matrices 
$\{\mathbf{H}_{k}^{(\ell)}\}_{k=0}^K$.  
The propagation rule of layer $\ell$ is
\begin{equation}
\mathbf{X}_{\mathcal{P}}^{(\ell)}
=\sigma\!\Bigg(
\sum_{k=0}^{K} \mathbf{S}_{\mathcal{P}}^k \,\mathbf{X}_{\mathcal{P}}^{(\ell-1)} \mathbf{H}_{k}^{(\ell)}
\Bigg),
\end{equation}
where $\mathbf{H}_{k}^{(\ell)} \in \mathbb{R}^{F_{\ell-1}\times F_\ell}$ are trainable filter coefficient matrices 
and $\sigma(\cdot)$ is a pointwise nonlinearity (e.g., ReLU).  

A $L$-layer GTCNN is obtained by stacking such modules.
\end{definition}

While effective for shorter temporal sequences the clear bottleneck here is the quadratic dependency in \textit{both} the number of nodes and sequence length, this makes the modeling of long time-series unfeasible. Furthermore the product graphs do not capture instantaneous signal specfici dependencies and are usually a binary graph.

\subsection{Graph Variate Signal Analysis}
\begin{figure*}[t]
\centering
\resizebox{\linewidth}{!}{%
\begin{tikzpicture}[line cap=round,line join=round,>=Latex,transform shape]

\tikzset{
  blk/.style={
    rounded corners=3pt,
    draw=black,
    very thick,
    minimum width=40mm,
    minimum height=9mm,
    align=center
  },
  note/.style={font=\small}
}
\definecolor{cin}{HTML}{F2B8C6}
\definecolor{cfn}{HTML}{F8E1A1}
\definecolor{csu}{HTML}{CBD9FF}
\definecolor{cgvt}{HTML}{E8D7FF}
\definecolor{cconv}{HTML}{BFE3F0}
\definecolor{ccomb}{HTML}{DDEFC5}

\node[blk, fill=cin, font=\normalsize] (x)
  {Input $X\in\mathbb{R}^{N\times T}$};

\node[blk, fill=cfn, above=7mm of x, font=\normalsize] (j)
  {Node fn.\ $J(t)=F_V\!\big(x_i(t),x_j(t)\big)$};

\node[blk, fill=csu, above=6mm of j, font=\normalsize] (w)
  {Stable support $W$};

\node[blk, fill=cgvt, above=6mm of w, font=\normalsize] (omega)
  {Graph-Variate Tensor $\Omega(t)=W\circ J(t)$};

\node[above=2mm of omega, note] {Built from inputs};

\draw[->, very thick] (x.north) -- (j.south);

\draw[->, very thick]
  ([xshift=-1mm]j.north west)
    .. controls +(-18mm,10mm) and +(-22mm,-10mm) ..
  ([yshift=-2mm]omega.west);

\draw[->, very thick] (w.north) -- (omega.south);

\node[blk, fill=cconv, right=33mm of x, yshift=1mm, font=\normalsize] (conv)
  {Batched Graph Convolution\\[1pt]
   $Z(t)=\Omega(t)\,X(t)$};

\node[blk, fill=ccomb, above=15mm of conv, xshift=26mm, font=\normalsize] (comb)
  {Weighted Combine\\[2pt]
   $Y(t)=\sigma\!\Big(\Theta\,[\,a_t X(t) + b_t Z(t)\,]\Big)$};

\draw[->, very thick] (x.east) to[out=0, in=180] (conv.west);

\draw[->, very thick] (omega.east)
  .. controls +(22mm,-6mm) and +(-20mm,8mm) ..
  (conv.north west);

\draw[->, very thick] (conv.north) to[out=90, in=180] (comb.west);

\draw[->, very thick] (x.east) .. controls +(25mm,18mm) and +(-14mm,-10mm) ..
  node[pos=.60, above, note] {skip} (comb.west);

\node[blk, fill=gray!20, right=25mm of comb, font=\normalsize] (head)
  {Readout (MLP / Task head)};
\draw[->, very thick] (comb.east) -- (head.west);

\end{tikzpicture}
}
\caption{Graph-Variate Neural Network (GVNN) layer. A multivariate sequence
$X\in\mathbb{R}^{N\times T}$ induces instantaneous connectivity $J(t)$, which is
combined with a long-term support $W$ to form $\Omega(t)=W\circ J(t)$. In parallel,
$X$ and $\Omega(t)$ drive a batched graph convolution $Z(t)=\Omega(t)X(t)$. A skip
connection carries $X$ to the combiner, which applies a learned linear map and a
nonlinearity, $Y(t)=\sigma\!\big(\Theta[\,a_t X(t)+b_t Z(t)\,]\big)$.}
\label{fig:gvnn-arch}
\end{figure*}

A potential issue with GSP based neural network architectures is that the relationship and relevance of the underlying graph structure to the signal is unclear and typically unchanging. There has been recent progress in addressing this in the form of CoVariance Neural Networks (VNN). Here the sample covariance matrix is used as a GSO, giving us a natural interpretation of Graph Convolution that is inherently linked to Principal Component Analysis (PCA) \citep{pca1}.

Temporal data however, is dynamic \citep{nonstat}, i.e.\ a single covariance estimation aggregating information over time may not be a suitable representation, particularly in the presence of irrelevant noise. Graph Variate Signal Analysis (GVSA) brings a sample-level, graph-weighted perspective to multivariate signals: it re-introduces node-to-node relationships in each time instant, but modulates their impact with a stable (or longer-term) graph. Importantly this \textit{does not depend on a window length}. This yields time-varying connectivity estimates and graph metrics that are more robust against momentary noise yet still capture fine-grained transient dynamics. It has been shown that GVSA outperforms many sliding-window or purely instantaneous techniques \citep{Smith2019Gvsa}.

\begin{definition}[Graph-Variate Signal Analysis]
Let $\Gamma = (\mathcal{V}, \mathbf{X}, \mathcal{E}, \mathbf{W})$ be a graph-variate signal, where 
\begin{itemize}
    \item $\mathcal{V}$ is the set of $N$ nodes,
    \item $\mathbf{X} \in \mathbb{R}^{N \times T}$ is the multivariate signal (each of the $N$ nodes has $T$ samples),
    \item $\mathcal{E}$ is the set of edges, and
    \item $\mathbf{W} \in \mathbb{R}^{N \times N}$ is the weighted adjacency matrix with entries $w_{ij}$.
\end{itemize}
Define a bivariate \emph{node-space function} $F_V$ as
\[
J_{ij}(t) 
\;=\;
F_V\bigl(x_i(t),\,x_j(t)\bigr),
\quad
\text{for } i\neq j,\quad
J_{ii}(t) = 0.
\]
\emph{Graph-Variate Signal Analysis} (GVSA) produces, at each time sample $t$, an $N \times N$ matrix given by the Hadamard (element-wise) product
\[
\boldsymbol{\Omega}(t)
\;=\;
\mathbf{W} \;\circ\; \mathbf{J}(t),
\]
whose entries are
\[
\Omega_{ij}(t)
\;=\;
\bigl[\mathbf{W} \circ \mathbf{J}(t)\bigr]_{ij}
\;=\;
w_{ij}\,F_V\!\bigl(x_i(t),\,x_j(t)\bigr).
\]
This, overall, gives an $N \times N \times T$ tensor representation.
\end{definition}

This framework not only allows a sample by sample high temporal resolution but is also computationally efficient. Note that no eigendecomposition is done at any stage and the entire analysis is in the node-space. Furthermore, node functions are typically chosen to exploit computational efficiency through low rank, vector outer product based operations. The stable support acts as an inherent stabilizer emphasizing stable long-term correlations and minimizing noise while still readily picking up instantaneous dynamics, providing a trade-off between global and local connectivity information. This is typically chose as the long-term correlation matrix of the signal itself or averaged over a cohort \citep{Roy2024FAST,Smith2019Gvsa}.

\section{Graph Variate Neural Networks}
\begin{figure*}[t]
  \centering
  \includegraphics[width=\textwidth]{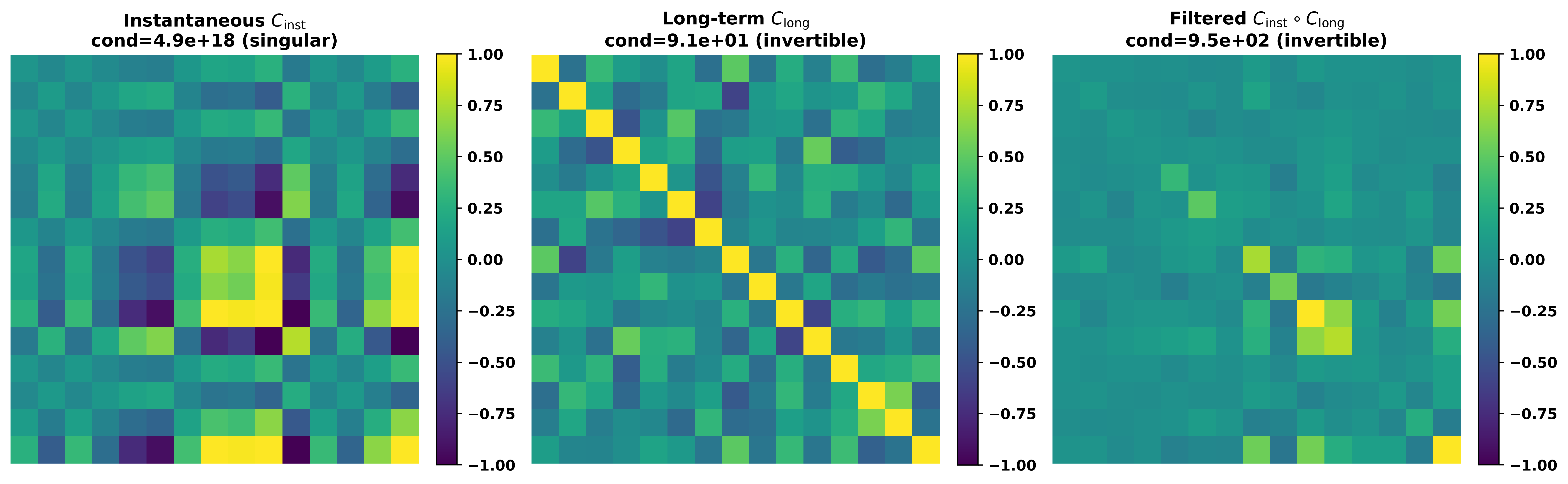}
  \caption{Comparison of instantaneous correlation profile, long-term covariance, and Hadamard-filtered covariance matrices. Each panel displays the respective matrix with its condition number and invertibility status.}
  \label{fig:correlation_matrices}
\end{figure*}

By combining GSP and GVSA approaches we conjecture that time-step wise convolution of the graph signal with its own instantaneous temporal connectivity profile can exploit the rich spatio-temporal information present in many real-life signals.

In this vein, we define Graph-Variate Neural Networks as follows.

\begin{definition}[Graph-Variate Neural Network (GVNN, layer-wise form)]
\label{def:gvnn_layer}

Let $\mathbf{W}\in\mathbb{R}^{N\times N}$ be a stable (long-term) graph support.  
For an input sequence $\mathbf{X}^{(\ell)}\in\mathbb{R}^{N\times T}$ at layer $\ell$,  
denote its $t$-th column by $\mathbf{x}^{(\ell)}(t)\in\mathbb{R}^N$.

The input-dependent graph-variate tensor is
\begin{equation}
\begin{gathered}
  \boldsymbol{\Omega}^{(\ell)}(\mathbf{X}^{(\ell)}) \in \mathbb{R}^{N\times N\times T}, \\
  \Omega^{(\ell)}_{ij}(t) = w_{ij}\,F_V\!\bigl(x^{(\ell)}_i(t),\,x^{(\ell)}_j(t)\bigr).
\end{gathered}
\end{equation}
for a chosen bivariate function $F_V(\cdot,\cdot)$.

Let $\mathbf{a}^{(\ell)},\mathbf{b}^{(\ell)}\in\mathbb{R}^T$ be learnable scalar
filter coefficients (one per time step), and let
$\mathbf{D}_{\mathbf{a}^{(\ell)}}=\operatorname{diag}(\mathbf{a}^{(\ell)})$,
$\mathbf{D}_{\mathbf{b}^{(\ell)}}=\operatorname{diag}(\mathbf{b}^{(\ell)})$.
Define the time-aligned multiplication
\begin{equation}
  \bigl(\boldsymbol{\Omega}^{(\ell)}(\mathbf{X}^{(\ell)}) * \mathbf{X}^{(\ell)}\bigr)_{:,t}
  \;=\; \boldsymbol{\Omega}^{(\ell)}(t)\,\mathbf{x}^{(\ell)}(t),
  \qquad t=1,\dots,T.
\end{equation}

Then the pre-activation output is
\begin{equation}
  \mathbf{Z}^{(\ell)}
  \;=\;
  \mathbf{X}^{(\ell)} \mathbf{D}_{\mathbf{a}^{(\ell)}}
  \;+\;
  \bigl(\boldsymbol{\Omega}^{(\ell)}(\mathbf{X}^{(\ell)}) * \mathbf{X}^{(\ell)}\bigr)\,\mathbf{D}_{\mathbf{b}^{(\ell)}},
\end{equation}

which is followed by a \emph{trainable time-mixing weight block}
$\boldsymbol{\Theta}^{(\ell)} \in \mathbb{R}^{T\times T}$ and a pointwise activation $\sigma(\cdot)$:
\begin{equation}
  \mathbf{X}^{(\ell+1)}
  \;=\;
  \sigma\!\bigl(\,\mathbf{Z}^{(\ell)} \boldsymbol{\Theta}^{(\ell)}\,\bigr)
  \;\in\;\mathbb{R}^{N\times T}.
\end{equation}

Stacking $L$ such layers yields
$\mathbf{X}^{(L)}$, which can be further mapped
to a task-dependent readout layer.
\end{definition}

Here, utilizing the fast batch based parallel processing allows a natural convolution operation where a spatio-temporal signal at a given timestep is convolved with its own connectivity profile. Also given the low rank nature of the connectivity profiles, we provide a robust platform to capture signal dependent functional inter-dependencies while being computationally efficient. Note also that we can optimize the stable support, and thus the entire dynamic connectivity profile, efficiently through training. This retains a high temporal resolution while allowing end-to-end optimization. Figure 2 provides a visual representation of a single-lay GVNN.

Equivalently, from a GSP lens, we can define the Graph-Variate Fourier Transform (GVFT) as projections of the signal vector onto its own temporal connectivity profile, this returns a matrix of size $N\times T$ that allows a simultaneous time-frequency decomposition. That is, each column of the GVFT represents the frequencies in terms of the eigenbasis of the functional graph at that time step, as demonstrated on gaussian data in figure 1.

\begin{definition}[Graph Variate Fourier Transform]
Let $\mathbf{X}=[\mathbf{x}_1,\dots,\mathbf{x}_T]\in\mathbb{R}^{N\times T}$ be a spatio-temporal signal, 
where $\mathbf{x}_t\in\mathbb{R}^N$ is the $t$-th snapshot. 
For each $t$, define
\begin{equation}
\boldsymbol{\Omega}_t = \bigl[f(x_t(i),x_t(j))\,w_{ij}\bigr]_{i,j=1}^N,
\end{equation}
with $\mathbf{W}\in\mathbb{R}^{N\times N}$ a connectivity matrix and $f(\cdot,\cdot)$ a symmetric node-pair function 
(e.g.\ $f(a,b)=(a-b)^2$).  
Since $\boldsymbol{\Omega}_t$ is symmetric, it admits $\boldsymbol{\Omega}_t = \mathbf{U}_t\boldsymbol{\Lambda}_t \mathbf{U}_t^\top$.  
The GVFT of $\mathbf{x}_t$ is
\begin{equation}
\widehat{\mathbf{x}}_t = \mathbf{U}_t^\top \mathbf{x}_t,
\end{equation}
and stacking over time yields 
$\widehat{\mathbf{X}}=[\widehat{\mathbf{x}}_1,\dots,\widehat{\mathbf{x}}_T]$.
\end{definition}

\begin{definition}[Graph-Variate frequency response]\label{def:GV_freq}
For a fixed time index $t$, let the instantaneous connectivity slice $\boldsymbol{\Omega}(t)\in\mathbb R^{N\times N}$ be symmetric with eigendecomposition
\[
\begin{aligned}
\boldsymbol{\Omega}(t) &= \mathbf{V}_t \boldsymbol{\Lambda}_t \mathbf{V}_t^{\!\top}, \\
\boldsymbol{\Lambda}_t &= \operatorname{diag}\!\bigl(\lambda_1(t), \dots, \lambda_N(t)\bigr).
\end{aligned}
\]

Consider the \emph{two-tap} Graph-Variate filter
\[
   \mathbf{y}(t) \;=\; a_t\,\mathbf{x}(t) + b_t\,\boldsymbol{\Omega}(t)\,\mathbf{x}(t),
   \qquad a_t,b_t\in\mathbb R,
\]
acting on an input vector $\mathbf{x}(t)\in\mathbb R^{N}$.
Define the Graph Fourier transforms
\[
   \tilde{\mathbf{x}}(t) := \mathbf{V}_t^{\!\top}\mathbf{x}(t),
   \qquad
   \tilde{\mathbf{y}}(t) := \mathbf{V}_t^{\!\top}\mathbf{y}(t).
\]
Substituting the eigen-decomposition yields
\[
   \tilde{\mathbf{y}}(t)
   \;=\;
   \bigl(a_t\,\mathbf{I}_N + b_t\,\boldsymbol{\Lambda}_t\bigr)\,\tilde{\mathbf{x}}(t),
\]
or component-wise,
\[
   \tilde{y}_i(t)
   \;=\;
   \underbrace{\bigl(a_t + b_t\,\lambda_i(t)\bigr)}_{h_t(\lambda_i(t))}
   \,\tilde{x}_i(t),
   \qquad i=1,\dots,N.
\]

\noindent
The scalar function
\begin{equation}
   h_t(\lambda) := a_t + b_t\,\lambda
\end{equation}
is called the \emph{instantaneous frequency response} of the
Graph-Variate filter at time~$t$.
Thus, spectrally, the filter acts as point-wise multiplication:
\begin{equation}
   \tilde{y}_i(t) \;=\; h_t(\lambda_i(t))\,\tilde{x}_i(t).
\end{equation}
\end{definition}

Definition~\ref{def:GV_freq} is in direct analogy with the classical convolution theorem
$\tilde{y}_i = \tilde{h}(\lambda_i)\,\tilde{x}_i$ for polynomial graph
filters, but with a spectrum $\{\lambda_i(t)\}$ and a response
$h_t$ that are re-evaluated at every time step.

\begin{theorem}[Parseval identity for the GVFT]
\label{thm:GVFTParseval}
For every time index $t$ and every signal vector $\mathbf{x}(t)$,
\[
\sum_{i=1}^{N} \bigl|\widehat{x}_i(t)\bigr|^{2} = \sum_{i=1}^{N} \bigl|x_i(t)\bigr|^{2}.
\]
Equivalently, $\|\widehat{\mathbf{x}}(t)\|_{2} = \|\mathbf{x}(t)\|_{2}$.
\end{theorem}

\begin{proof}
As long as the stable support is symmetric, the eigendecomposition of a connectivity profile results in $\mathbf{U}_t$ being orthonormal, i.e., $\mathbf{U}_t^{\!\top}\mathbf{U}_t = \mathbf{I}_{N}$. Applying this to $\widehat{\mathbf{x}}(t) = \mathbf{U}_t^{\!\top}\mathbf{x}(t)$, we get:
\begin{align*}
\|\widehat{\mathbf{x}}(t)\|_{2}^{2}
&= \widehat{\mathbf{x}}(t)^{\!\top}\widehat{\mathbf{x}}(t) \\
&= (\mathbf{U}_t^{\!\top}\mathbf{x}(t))^{\!\top}(\mathbf{U}_t^{\!\top}\mathbf{x}(t)) \\
&= \mathbf{x}(t)^{\!\top} \mathbf{U}_t \mathbf{U}_t^{\!\top} \mathbf{x}(t) \\
&= \mathbf{x}(t)^{\!\top} \mathbf{x}(t) = \|\mathbf{x}(t)\|_{2}^{2}.
\end{align*}
\end{proof}

\begin{remark}
Because the GVFT basis $\mathbf{U}_t$ depends on the instantaneous, \emph{signal-derived} slice $\boldsymbol{\Omega}(t)$, Parseval's identity holds \emph{separately} for each time step $t$; summing over $t$ shows energy conservation for the entire spatio-temporal matrix $\mathbf{X}=[\mathbf{x}(1)\dots \mathbf{x}(T)]$:
\[
   \sum_{t=1}^{T}\|\widehat{\mathbf{x}}(t)\|_{2}^{2}
   =\sum_{t=1}^{T}\|\mathbf{x}(t)\|_{2}^{2}.
\]

\end{remark}

While we can clearly extend GVNNs by including higher order polynomials per time-step, we exclude these for the sake of simplicity. We further note that, computationally (and intuitively), a right multiplication with a time-wise filter coefficient matrix is more efficient than using polynomial filter coefficients (the typical choice in the GNN literature).

This dual perspective is a shift from the traditional GSP sense of graph frequencies given that the graph is constructed from the signal itself. In fact there is a closer link to PCA present here. Projecting signals onto a data-driven dynamic eigenbasis (i.e.\ the sample covariance matrix in PCA), supported by a stable support, allows a high level of precision and interpretability.

\subsection{Temporal Signal Dependent Convolution}

\begin{table*}[t]
\centering
\caption{MSE For Temporal Graph Models on different chaotic maps}
\label{tab:chaotic_mse}
\begin{tabular}{llccc}
\toprule
Dataset & Model & $H{=}3$ & $H{=}6$ & $H{=}12$ \\
\midrule
\multirow{4}{*}{Hopfield} 
& GVNN   & $\mathbf{0.0237 \pm 0.0008}$ & $\mathbf{0.1131 \pm 0.0024}$ & $\mathbf{0.1837 \pm 0.0053}$ \\
& GTCNN  & $0.1029 \pm 0.0052$ & $0.1683 \pm 0.0014$ & $0.2229 \pm 0.0031$ \\
& GVARMA & $0.5283 \pm 0.0082$ & $0.5846 \pm 0.0086$ & $0.6514 \pm 0.0060$ \\
& GGRNN  & $0.0628 \pm 0.0166$ & $0.1742 \pm 0.0083$ & $0.2662 \pm 0.0107$ \\
\midrule
\multirow{4}{*}{Lorenz} 
& GVNN   & $\mathbf{0.2143 \pm 0.0083}$ & $\mathbf{0.5001 \pm 0.1623}$ & $0.7325 \pm 0.0092$ \\
& GTCNN  & $0.8163 \pm 0.0456$ & $0.8595 \pm 0.0282$ & $0.9039 \pm 0.0145$ \\
& GVARMA & $0.8739 \pm 0.0188$ & $0.8764 \pm 0.0397$ & $0.9027 \pm 0.0027$ \\
& GGRNN  & $0.3528 \pm 0.0271$ & $0.5327 \pm 0.0159$ & $\mathbf{0.5971 \pm 0.0049}$ \\
\midrule
\multirow{4}{*}{MacArthur} 
& GVNN   & $\mathbf{0.0910 \pm 0.0004}$ & $\mathbf{0.2509 \pm 0.0046}$ & $\mathbf{0.3914 \pm 0.0087}$ \\
& GTCNN  & $0.8800 \pm 0.0148$ & $0.8479 \pm 0.0123$ & $0.8856 \pm 0.0015$ \\
& GVARMA & $0.5454 \pm 0.0325$ & $0.7608 \pm 0.0794$ & $0.8355 \pm 0.0212$ \\
& GGRNN  & $0.2232 \pm 0.0009$ & $0.4252 \pm 0.0099$ & $0.5073 \pm 0.0034$ \\
\bottomrule
\end{tabular}
\end{table*}
Temporal information provides rich, discriminative information that could significantly enhance machine learning models. As an example, EEG signals have a very high temporal resolution. While traditionally being studied in the frequency or spectral domain, time domain analysis of EEG signals provides great potential in enhancing Brain Computer Interfaces (BCI).

We focus here on two common temporal domain node-space functions. Given graph signals $\mathbf{x}$, we define:

\begin{itemize}
\item \textbf{Instantaneous Correlation (IC):}
\begin{equation}
F_V(x_i(t), x_j(t)) 
= \left| \big(x_i(t) - \bar{x}_i\big)\,\big(x_j(t) - \bar{x}_j\big) \right|,
\end{equation}
where $\bar{x}_i = \tfrac{1}{T}\sum_{t=1}^T x_i(t)$ is the temporal mean of node $i$.

\item \textbf{Local Dirichlet Energy (LDE) \citep{mde}:}
\begin{equation}
F_V(x_i(t), x_j(t)) = \big(x_i(t)-x_j(t)\big)^2.
\end{equation}
\end{itemize}

Instantaneous correlation is rank-1 and LDE rank-3, both expressible as sums of outer products. Such structures are efficient, as outer products reduce to parallelizable vector operations that GPUs compute rapidly. This approach combined with the Hadamard support is inspired by recent advances in Parameter-Efficient Fine-Tuning (PEFT) \citep{hu2021lora,huang2025hira}, a method to improve the efficiency of Large Language Models (LLMs). We direct the interested reader to Appendix~\ref{lorahira}.

The instantaneous correlation captures co-deviation from mean temporal patterns. The LDE node function has a direct relationship to the Dirichlet energy and captures local node gradient changes. We can also take linear combinations of these two node functions in order to exploit both their contrasting views.

There is an important observation to make here with transformers, given that the attention mechanism can be argued to use a ``graph'' constructed from the data itself for convolution \citep{att}. In fact, recent ideas have provided a unifying view on Transformers and GNNs, arguing that transformers are GNNs that operate on a data-specific graph \citep{joshi2025}. Thus GVNNs can be argued to be a form of attention with a fundamentally different formulation, i.e.\ the time-step specific tensor weighted against a stable support. We expand on this in Appendix~\ref{sec:theory-data-driven-graphs} for the interested reader.

We first introduce the following definitions.

\textbf{Definition 1.}  
We observe $T$ time-centered samples $\mathbf{x}_t \in \mathbb{R}^N$ for $t = 1, \dots, T$, and define
\[
  \bar{\mathbf{x}}= \frac{1}{T} \sum_{t=1}^{T} \mathbf{x}_t,
  \quad
  \tilde{\mathbf{x}}_t= \mathbf{x}_t - \bar{\mathbf{x}}
\]
and we assume
\[
  \boxed{\mathbf{W}\succ 0} \quad \text{(PSD)}.
\]
Let $w_{ij} = W_{ij}$. For each fixed $t$, define the stabilized \emph{instantaneous correlation profile}
\[
  \rho_t(i,j)
  = w_{ij} \, \bigl| \tilde{x}_i^{(m)}(t) \, \tilde{x}_j^{(m)}(t) \bigr|,
  \quad i,j = 1, \dots, N.
\]

\vspace{1em}

\textbf{Definition 2 (Sylvester's Law of Inertia).}  
Let $\mathbf{A} \in \mathbb{S}^N$ be a symmetric matrix of rank $r$ with inertia $(p, q, 0)$, meaning $p$ positive and $q$ negative eigenvalues such that $p + q = r$. Then $\mathbf{A}$ is congruent to the diagonal normal form
\[
  \mathbf{G} =
  \begin{pmatrix}
    \mathbf{I}_p & \mathbf{0} & \mathbf{0} \\
    \mathbf{0} & -\mathbf{I}_q & \mathbf{0} \\
    \mathbf{0} & \mathbf{0} & \mathbf{0}
  \end{pmatrix},
  \quad p + q = r.
\]
Two symmetric matrices are congruent if and only if they have the same rank and signature $(p, q, 0)$.

\label{sec:stat_no_z}
\label{ssec:schur_full_rank}

\begin{theorem}[Full-rank preservation under Hadamard filtering]
\label{thm:schur_full_rank}
Let $J_{ij}=F_V(x_i(t), x_j(t)) = x_i(t)x_j(t)$ be the \textit{unfiltered} instantaneous correlation profile with rank $m<N$. 
If every component of $\tilde{\mathbf{x}}_t^{(m)}$ is non-zero, and $\mathbf{W}$ is of full rank, then
\[
  \operatorname{rank}\bigl(\boldsymbol{\Omega}(t)
\;=\;
\mathbf{W} \;\circ\; \mathbf{J}(t)\bigr) = N,
  \quad\text{i.e.\ }\boldsymbol{\Omega}(t)\text{ is invertible.}
\]
Moreover, $\boldsymbol{\Omega}(t)$ is symmetric positive-definite, preserving the signature of $\mathbf{W}$.
\end{theorem}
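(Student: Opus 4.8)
The plan is to reduce the Hadamard product to a two-sided diagonal scaling of the support, after which rank and inertia follow from standard linear algebra. First I would record the central identity: writing $D_{x}=\diag\!\bigl(x_1(t),\dots,x_N(t)\bigr)$ for the snapshot vector, the rank-one node function $J(t)=x(t)\,x(t)^{\!\top}$ satisfies
\[
  \bigl(W\circ J(t)\bigr)_{ij}=W_{ij}\,x_i(t)\,x_j(t)=\bigl(D_{x}\,W\,D_{x}\bigr)_{ij},
\]
so that $\Omega(t)=D_{x}\,W\,D_{x}$. This single observation converts the entrywise (Hadamard) operation into an ordinary matrix congruence, which is the object that controls both rank and signature.

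Second, I would read off invertibility. The hypothesis that every component of the snapshot is non-zero is exactly the statement that $D_{x}$ is invertible. Since left- and right-multiplication by invertible matrices preserves rank, $\operatorname{rank}\Omega(t)=\operatorname{rank}(D_{x}\,W\,D_{x})=\operatorname{rank}(W)=N$, so $\Omega(t)$ is full rank and hence invertible; this is the promised remedy to the rank deficiency of the bare profile $J(t)$ (here only rank $m=1$, and rank $m<N$ in general for the other node functions).

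Third, for the signature claim I would invoke Sylvester's law of inertia. Because $D_{x}$ is real and diagonal, $\Omega(t)=D_{x}\,W\,D_{x}=D_{x}\,W\,D_{x}^{\!\top}$ is a congruence transform of $W=C$, so $\Omega(t)$ and $C$ share identical inertia, i.e.\ the same count of positive, negative, and zero eigenvalues. In particular, if $C$ is symmetric positive-definite then so is $\Omega(t)$, giving positive-definiteness for free. Equivalently, one may quote the strong form of the Schur product theorem: the Hadamard product of a positive-definite $C$ with a positive-semidefinite $J(t)$ whose diagonal entries $J_{ii}(t)=x_i(t)^2$ are all strictly positive is again positive-definite.

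The hard part will be extending cleanly beyond the rank-one case. For higher-rank node functions (e.g.\ LDE, rank $3$) the profile $J(t)$ is no longer a single outer product, so $\Omega(t)$ becomes a \emph{sum} of diagonal congruences $\sum_{k} D_{a_k}\,W\,D_{a_k}$ rather than one, and the tidy single-congruence identity no longer applies. There I would abandon the congruence route and rely solely on the Schur product theorem, which only requires $J(t)\succeq 0$ together with a strictly positive diagonal. The delicate step is then pinning down exactly which components must be non-vanishing (the condition phrased as ``every component of $\tilde x_t^{(m)}$ non-zero'') so that the diagonal of $J(t)$ stays strictly positive and the strict Schur conclusion, and hence full rank, is guaranteed.
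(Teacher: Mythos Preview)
Your proposal is correct and follows essentially the same route as the paper: both reduce $W\circ J(t)$ to the diagonal congruence $D\,W\,D$ (with $D$ the diagonal matrix of the snapshot entries, invertible by the non-vanishing hypothesis), then invoke Sylvester's law of inertia to transfer full rank and positive-definiteness from $W=C$ to $\Omega(t)$. The only cosmetic difference is that the paper phrases the diagonal in terms of $|\tilde x_t|$, and it does not include your additional Schur-product-theorem remark or the discussion of the higher-rank LDE extension.
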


\begin{proof}[Proof of Theorem~\ref{thm:schur_full_rank}]
Set
\[
  \mathbf{d}_t := \bigl|\tilde{\mathbf{x}}_t\bigr|\in\mathbb{R}^N,
  \qquad
  \mathbf{D}_t:= \diag\bigl(\mathbf{d}_t\bigr),
\]
so each $\mathbf{D}_t^{(m)}$ is diagonal with strictly positive entries and thus invertible. The Hadamard product identity gives
\[
  \boldsymbol{\Omega}(t)
  = \mathbf{W}\circ \bigl(\tilde{\mathbf{x}}_t\tilde{\mathbf{x}}_t^{(m)\!\top}\bigr)
  = \mathbf{D}_t\,\mathbf{W}\,\mathbf{D}_t,
\]
i.e.\ $\boldsymbol{\Omega}(t)$ is congruent to $\mathbf{W}$.

Now applying Sylvester's Law, since $\mathbf{W}\succ0$ has inertia $(N,0,0)$, any matrix congruent to it must share the same inertia. Therefore
\[
 \boldsymbol{\Omega}(t)\succ0,
  \quad
  \mathrm{rank}(\boldsymbol{\Omega}(t))=\mathrm{rank}(\mathbf{W})=N.
\]
This establishes both invertibility and positive-definiteness.
\end{proof}

This theorem shows that Hadamard filtration with a stable support \textit{induces} stability into the instantaneous correlation profile.

Figure~\ref{fig:correlation_matrices} shows empirical evidence of Theorem~\ref{thm:schur_full_rank} where the Hadamard filtered matrix by the full-rank long-term correlation matrix is now invertible and has a much lower condition number. We prove similar results for the LDE case in Appendix~\ref{rankliftlde}.

The LDE connectivity profile has a distinct relationship with the traditional Dirichlet energy of a signal (naturally encoding a measure of smoothness into signal convolutions) as shown in the following theorem.

\begin{theorem}[Gershgorin--Dirichlet Bound]
\label{thm:gershgorin_dirichlet}
Let $\mathbf{W}\in\mathbb R^{N\times N}$ be symmetric and $\mathbf{x}\in\mathbb R^N$ any signal. Form
\[
J_{ij}(t)=F_V(x_i(t), x_j(t))= (x_i(t)-x_j(t))^2,
\quad \boldsymbol{\Omega}(t)=\mathbf{W}\circ \mathbf{J}(t),
\]
and define
\[
\mathcal{E}_{\mathrm{abs}}(t)=\tfrac{1}{2}\sum_{i,j}|w_{ij}\,(x_i(t)-x_j(t))^2|.
\]
Then the spectral radius satisfies
\[
\rho(\boldsymbol{\Omega}(t))\le 2\,\mathcal{E}_{\mathrm{abs}}(t).
\]
\end{theorem}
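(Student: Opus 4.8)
The plan is to apply the Gershgorin circle theorem directly to $\Omega(t)$, exploiting the crucial structural fact that its diagonal vanishes identically. First I would observe that $\Omega_{ii}(t) = W_{ii}\,(x_i(t)-x_i(t))^2 = 0$ for every $i$, so every Gershgorin disc is centred at the origin. Consequently the disc associated with row $i$ is $\{z : |z| \le R_i(t)\}$ with radius equal to the absolute row sum $R_i(t) = \sum_{j\neq i}|\Omega_{ij}(t)| = \sum_{j\neq i}|W_{ij}|\,(x_i(t)-x_j(t))^2$, where I have used that the squared differences are nonnegative to drop the absolute value onto $W_{ij}$ alone.

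Next, Gershgorin's theorem guarantees that every eigenvalue of $\Omega(t)$ lies in the union of these discs, so each eigenvalue $\lambda$ satisfies $|\lambda| \le R_i(t)$ for some $i$; taking the maximum over rows yields $\rho(\Omega(t)) \le \max_i R_i(t)$. The final step is to relate the maximal row sum to $\mathcal E_{\mathrm{abs}}(t)$: since every radius is nonnegative we have $\max_i R_i(t) \le \sum_i R_i(t)$, and unfolding the definition gives $\sum_i R_i(t) = \sum_{i,j}|W_{ij}|\,(x_i(t)-x_j(t))^2 = 2\,\mathcal E_{\mathrm{abs}}(t)$, the factor of two arising from the symmetric double-counting of each unordered pair $\{i,j\}$. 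Chaining these inequalities delivers $\rho(\Omega(t)) \le 2\,\mathcal E_{\mathrm{abs}}(t)$.

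There is no serious obstacle here; the result is essentially a one-line consequence of Gershgorin once the vanishing of the diagonal is recognised. The only two points requiring a moment's care are: (i) that $|W_{ij}\,(x_i(t)-x_j(t))^2| = |W_{ij}|\,(x_i(t)-x_j(t))^2$ because the squared term is nonnegative, which makes the Gershgorin radii coincide exactly with the summands defining $\mathcal E_{\mathrm{abs}}(t)$; and (ii) that the passage from $\max_i R_i(t)$ to $\sum_i R_i(t)$ is where the stated bound becomes loose, so one could in fact report the sharper $\rho(\Omega(t)) \le \max_i R_i(t)$, with the energy form given here being the cleaner symmetric statement. I would also remark that symmetry of $W$ (hence of $\Omega(t)$) is not strictly needed for the spectral-radius bound itself, but it is what guarantees real eigenvalues and gives the quantity $\mathcal E_{\mathrm{abs}}(t)$ its interpretation as an (absolute) Dirichlet energy.
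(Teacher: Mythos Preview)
Your proposal is correct and follows essentially the same approach as the paper: apply Gershgorin with the observation that the diagonal of $\Omega(t)$ vanishes, bound $\rho(\Omega(t))\le \max_i R_i$, and then use $\max_i R_i \le \sum_i R_i = 2\,\mathcal E_{\mathrm{abs}}$. Your additional remarks on the looseness of the $\max \le \sum$ step and on the role of symmetry are accurate and go slightly beyond what the paper states.
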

\begin{table*}[t]
\centering
\small
\caption{Final test MSE (lower is better) for PEMS-BAY and METR-LA across all models.}
\label{tab:pems_metr_all}
\begin{tabular}{llccc}
\toprule
Dataset & Model & Horizon 3 & Horizon 6 & Horizon 12 \\
\midrule
\multirow{7}{*}{PEMS-BAY}
  & GVNN (Trainable $\mathbf{W}$) & $\mathbf{0.1722 \pm 0.0093}$ & $\mathbf{0.2323 \pm 0.0080}$ & $\mathbf{0.3250 \pm 0.0229}$ \\
  & Transformer          & $0.3126 \pm 0.0099$          & $0.3467 \pm 0.0026$          & $0.3858 \pm 0.0061$          \\
  & LSTM                 & $0.3686 \pm 0.0231$          & $0.3810 \pm 0.0085$          & $0.4058 \pm 0.0022$          \\
  & GVNN (Static $\mathbf{W}$) & $0.7017 \pm 0.0460$          & $0.7642 \pm 0.0611$          & $0.8097 \pm 0.0280$          \\
  & GTCNN                & $0.9703 \pm 0.0032$          & $1.0010 \pm 0.0099$          & $1.0474 \pm 0.0071$          \\
  & GVARMA               & $0.7940 \pm 0.0128$          & $0.8271 \pm 0.0113$          & $0.8862 \pm 0.0052$          \\
  & GGRNN                & $0.8766 \pm 0.0040$          & $0.9175 \pm 0.0061$          & $0.9736 \pm 0.0018$          \\
\midrule
\multirow{7}{*}{METR-LA}
  & GVNN (Trainable $\mathbf{W}$) & $\mathbf{0.2218 \pm 0.0017}$ & $\mathbf{0.3082 \pm 0.0158}$ & $\mathbf{0.4434 \pm 0.0033}$ \\
  & Transformer          & $0.2928 \pm 0.0104$          & $0.3799 \pm 0.0072$          & $0.5384 \pm 0.0214$          \\
  & LSTM                 & $0.3554 \pm 0.0054$          & $0.4355 \pm 0.0021$          & $0.6644 \pm 0.0280$          \\
  & GVNN (Static $\mathbf{W}$) & $0.6012 \pm 0.0625$          & $0.6631 \pm 0.0790$          & $0.7076 \pm 0.0301$          \\
  & CPGraphST            & $0.9082 \pm 0.0191$          & $0.9234 \pm 0.0211$          & $0.9887 \pm 0.0138$          \\
  & GVARMA               & $0.9713 \pm 0.0364$          & $0.9527 \pm 0.0447$          & $1.0680 \pm 0.0339$          \\
  & GGRNN                & $0.8205 \pm 0.0167$          & $0.8621 \pm 0.0089$          & $0.9281 \pm 0.0048$          \\
\bottomrule
\end{tabular}
\end{table*}

\label{sec:theorem2proof}

\begin{proof}[Proof of Theorem~\ref{thm:gershgorin_dirichlet}]
Recall Gershgorin's circle theorem: if $\mathbf{A}=(a_{ij})$ is any $N\times N$ matrix then each eigenvalue $\lambda$ of $\mathbf{A}$ satisfies
\[
\lambda\in D\!\bigl(a_{ii},\,R_i(\mathbf{A})\bigr)
\quad\text{where}\quad
R_i(\mathbf{A})=\sum_{j\ne i}|a_{ij}|.
\]
In our case $\Omega_{ii}(t)=0$ and
\[
R_i(\boldsymbol{\Omega}(t))
=\sum_{j\ne i}|\Omega_{ij}(t)|
=\sum_{j\ne i}|w_{ij}\,(x_i(t)-x_j(t))^2|,
\]
so every eigenvalue $\delta$ of $\boldsymbol{\Omega}$ lies in one of the real intervals $[-R_i,R_i]$. Taking the union over $i$ gives
\[
\rho(\boldsymbol{\Omega})\subset\bigcup_{i=1}^N[-R_i,R_i]
=\bigl[-\max_iR_i,\;\max_iR_i\bigr].
\]

By definition,
\[
R_i
=\sum_{j\ne i}|w_{ij}\,(x_i(t)-x_j(t))^2|.
\]
Summing these radii over all $i$ yields
\begin{align*}
\sum_{i=1}^N R_i
&= \sum_{i=1}^N \sum_{j \ne i} \bigl|w_{ij}\,(x_i(t)- x_j(t))^2\bigr| \\
&= \sum_{i,j} \bigl|w_{ij}\,(x_i(t) - x_j(t))^2\bigr| \\
&= 2\,\mathcal{E}_{\mathrm{abs}}.
\end{align*}
Thus the total ``Gershgorin mass'' equals twice the Dirichlet energy.

Since $\rho(\boldsymbol{\Omega})=\max|\delta|\le\max_iR_i$, we need only show
$\max_iR_i\le2\mathcal{E}_{\mathrm{abs}}$. But from above,
$\sum_iR_i=2\mathcal{E}_{\mathrm{abs}}$, and the largest term in a sum of nonnegative numbers is no bigger than the sum itself. Hence
\[
\max_iR_i \;\le\; \sum_iR_i = 2\,\mathcal{E}_{\mathrm{abs}},
\]
hence
\[
\rho(\boldsymbol{\Omega})\le2\,\mathcal{E}_{\mathrm{abs}},
\]
completing the proof.
\end{proof}

Theorem~\ref{thm:gershgorin_dirichlet} shows that the spectral radius of the Hadamard filtered LDE is upper bounded by twice the absolute Dirichlet energy of the signal on the stable support. Intuitively, this ensures that the GVNN convolution is \textit{smoothness-aware} (see Appendix for more details). This relates the spectral radius of the LDE connectivity profile with the traditional Dirichlet energy of a graph signal on the stable support $\mathbf{W}$.

\section{Experimental Results}



\subsection{Chaotic Maps}

\begin{figure*}[t]
  \centering
  \includegraphics[width=\textwidth]{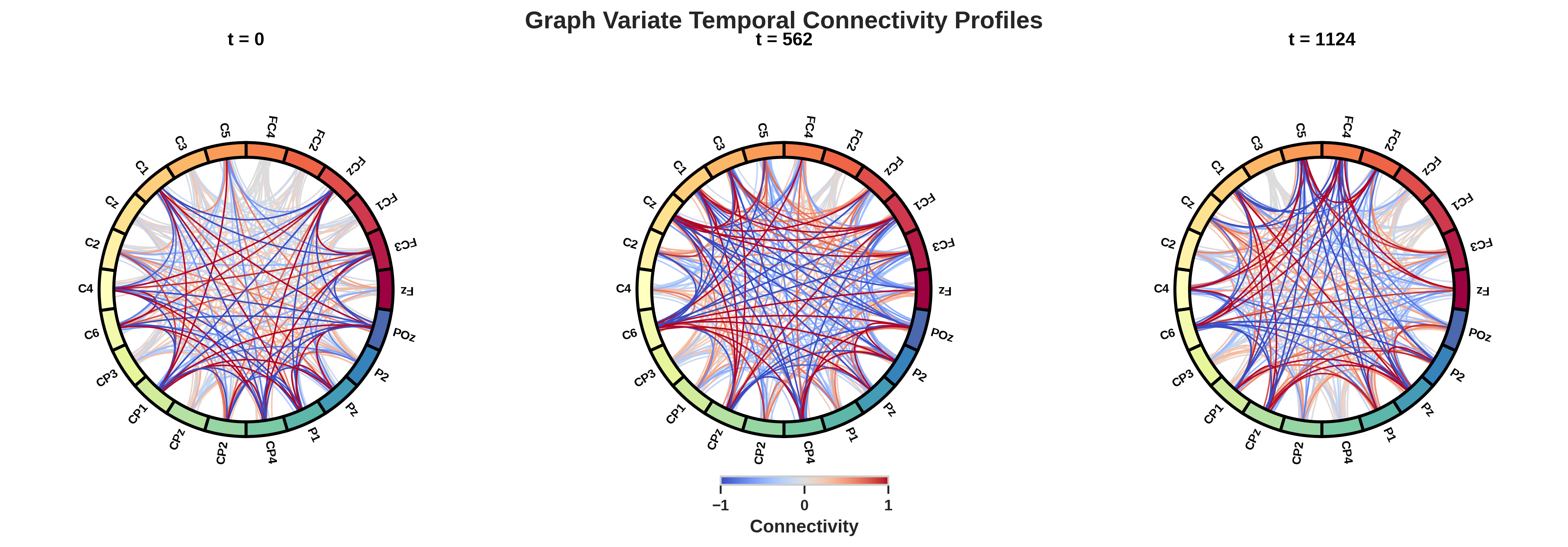}
  \caption{\textbf{Instantaneous Connectivity Profiles (Inst.\ Correlation). Transient Brain states are not stationary. This figure shows how, depending on the temporal position in the time-locked task, connectivity can change significantly. GVNNs exploit this, allowing a framework for the analysis of \textit{Dynamic Functional Connectivity.}}}
    \label{fig:NetC}
\end{figure*}

Chaotic systems pose unique challenges to statistical learning models and are also interpretable as benchmarks.  They thus provide a baseline to compare GVNN's with other graph based models for time-series \citep{chaos}.

We compare GVNNs with a standard GTCNN, a Gated Graph RNN (GGRNN) and Graph VARMA (GVARMA) model. For our node function we used a linear combination of the local dirchlet energy and instantaneous correlation while allowing the stable support to be learnt from data.

We have chosen these models primarily due to their core operation being some sort of Graph Convolution. Note we are not considering hybrid models such as Graph Wavenet \citep{} however do foresee future work incorporating GVNNs into hybrid architectures. 
We have chosen these models primarily due to their core operation being some sort of Graph Convolution. Note we are not considering hybrid models such as Graph Wavenet \citep{Wu2019GraphWaveNet} however we do foresee future work incorporating GVNNs into hybrid architectures. 

For all models except GTCNN (Which uses the long-term correlation as the spatial component for fairness) we initialize the stable support with the long-term stable correlation of the chaotic map and let the model optimize this end-to-end. The node function was a linear combination of the LDE and instantaneous correlation.  

We evaluate three multi-dimensional chaotic maps. The Coupled Lorenz, Hopfield and MacArthur maps
 
We see that GVNNs perform the best over all horizons on the Hopfield and Macarthur Map with large gains being visible in the MacArthur dataset in particular.
In the coupled  Lorenz map, while GVNNs perform the best over horizons of length $1$ and $3$, they are outperformed by GGRNNs over the horizon of length $5$. This could be due to temporal interactions being less predictive for longer horizon in this chaotic map, further, a model incorporating a combination of GVNNs and GGRNNs may be promising.

\begin{table*}[ht]
\centering
\small
\caption{EEG classification: Overall summary (K-fold CV)}
\label{tab:eeg_results}
\begin{tabular}{llccc}
\toprule
Dataset & Model & Accuracy (\%) & Kappa & Time (s) \\
\midrule
\multirow{4}{*}{BCI-2A}
& GVNN (LDE + Static W)       & $60.15 \pm 1.21$ & $0.4686 \pm 0.0162$ & 0.5 \\
& \textbf{EEGNet}             & $\mathbf{60.51 \pm 3.88}$ & $\mathbf{0.4735 \pm 0.0517}$ & 1.0 \\
& Transformer                 & $51.99 \pm 3.01$ & $0.3598 \pm 0.0401$ & 1.5 \\
& LSTM                        & $52.76 \pm 2.27$ & $0.3701 \pm 0.0303$ & 1.5 \\
\midrule
\multirow{4}{*}{PhysioNet}
& GraphVar+MLP (LDE + Learned W) & $80.29 \pm 0.82$ & $0.6058 \pm 0.0164$ & 2.0 \\
& \textbf{Transformer}           & $\mathbf{80.94 \pm 0.87}$ & $\mathbf{0.6189 \pm 0.0173}$ & 0.9 \\
& LSTM                           & $74.19 \pm 1.74$ & $0.4834 \pm 0.0351$ & 1.4 \\
& EEGNet                         & $79.61 \pm 1.55$ & $0.5922 \pm 0.0310$ & 3.2 \\
\bottomrule
\end{tabular}
\end{table*}

\subsection{Traffic Forecasting}

We evaluate four graph-based forecasting models on the METR-LA and PEMS-BAY traffic networks, two widely adopted benchmarks for spatio-temporal prediction tasks. We also compare performance with the more commonly used transformer and LSTM models, which serve as strong non-graph baselines. METR-LA contains four months of speed measurements from 207 sensors distributed across Los Angeles County, recorded at 5 minute intervals. PEMS-BAY comprises six months of data from 325 sensors in the San Francisco Bay Area at the same temporal resolution~\citep{Sun2020TrafficGLT,trafficforecasting}. Both datasets exhibit complex spatial dependencies arising from road network topology and temporal patterns driven by rush hours, weekends, and other recurring traffic phenomena.

Following standard practice in the traffic forecasting literature, we predict future speeds at horizons $h\in\{3,6,12\}$ time-steps (i.e.\ 15, 30, and 60 minutes ahead) given the past $T=6$ observations (30 minutes of historical data). The graph-based models follow the same architectural layout as in the previous experiment. However, for this evaluation we consider two variants of the two-layer GVNN: one using a fixed adjacency structure derived from the road network, and another where the support $W$ is treated as a trainable parameter that can be optimized during training.

Table 2 presents our results across both datasets and all prediction horizons. It can be noted that using a fixed support, GVNNs outperform the other graph-based models but remain inferior to the LSTM and Transformer baselines. Allowing $W$ to be learned, however, results in substantial gains in performance, with GVNNs significantly outperforming all competing models across nearly all settings. This suggests that the predefined road network adjacency, while informative, does not fully capture the effective dependencies between sensor locations---learned connections allow the model to discover latent relationships that better reflect actual traffic flow patterns. As these datasets have a large number of nodes, we do observe that GVNNs incur a notable increase in training time compared to the simpler baselines. Nevertheless, we believe that the considerable improvement in predictive accuracy justifies this computational overhead, particularly in applications where forecast quality is paramount.



\subsection{EEG Motor Imagery Tasks}

The BCI Competition IV 2a dataset~\citep{MOABB2023} comprises EEG recordings from nine subjects performing four motor imagery tasks (left hand, right hand, feet, tongue), with data recorded using a 17 channel setup. This dataset presents a challenging four-class classification problem that has become a standard benchmark for evaluating motor imagery decoding algorithms. The PhysioNet dataset comprises EEG recordings from 109 healthy subjects performing a simpler binary classification task, where participants imagine opening and closing either their right or left fist. The data is recorded using a 64 channel setup, providing substantially higher spatial resolution but also increasing the dimensionality of the input.

We evaluate all models using 5-fold cross validation with independent data splits to ensure robust performance estimates. For the BCI-2A dataset, we use a fixed $W$ set as the global long-term correlation matrix computed from the training set, leveraging the inherent spatial relationships between EEG channels. For the PhysioNet task, we instead allow $W$ to be learned during training, enabling the model to discover task-relevant connectivity patterns from the higher-dimensional channel space.

Table~\ref{tab:eeg_results} presents our results across both datasets. As expected, GVNNs exhibit faster training speeds on the lower channel BCI-2A dataset and achieve the fastest training time overall, with EEGNet only outperforming it slightly in terms of accuracy while requiring twice the computation time. Both models significantly surpass the LSTM and Transformer baselines on this dataset, suggesting that architectures designed with EEG structure in mind offer clear advantages for motor imagery decoding.

For the PhysioNet dataset, we observe an increase in training time for the GVNN model given the increase in channel count from 17 to 64. Despite this additional computational cost, GVNNs achieve competitive performance with the Transformer model while outperforming EEGNet in both accuracy and training speed. These results demonstrate that GVNNs scale reasonably well to higher-dimensional EEG setups while maintaining strong predictive performance.

\section{Conclusion and Limitations}

In this work we have introduced Graph Variate Neural Networks- a general framework that constructs signal dependent dynamic graph structures in a computationally efficient manner by exploiting one-shot batch processing. We further introduced two interpretable node functions, the Local Dirichlet Energy and instantaneous correlation. We show theoretically how a stable support can 'stabilize' these low-rank instantaneous structures while also being computationally simple.

In the notoriously hard task of EEG motor imagery classification, we show that GVNN's are competitive with and sometimes outperform (in terms of efficiency) traditionally used models such as the Transformer Architecture or EEGNet. This improvement in performance was sustained in the forecasting of chaotic systems, where non-trivial instantaneous interactions are present. GVNN's retained their superiority in traffic forecasting tasks, strongly outperforming strong traditional and graph based baselines.

We note that while we effectively capture \textit{intra}-channel connectivity, we are disregarding auto-correlative behavior by not connecting nodes in the time dimension. However, the improvement in performance by including signal dependent graph structures and reduction in computational time justify this decision. Furthermore,  mechanisms such as a temporal attention or convolutional module can be applied right after a GVNN layer to attend to inter time-dependencies.

We also note that our approach retains the quadratic complexity with the number of nodes such as in GTCNNs. This can become large when constructing signal specific connectivity profiles; however, such an approach \textit{would not be possible} using a product graph. Further work should also develop new node functions and stable supports, potentially incorporating spatial properties or even information theoretic measures.

\bibliography{refernces}

\appendices
\section{Appendix}
\subsection{Hardware}
All experiments were run on a single \textbf{NVIDIA A100} GPU (40\,GB VRAM). Training used \textbf{FP32} precision (no mixed precision), and all runs were executed on a single device without model or data parallelism.

\subsection{Experimental Details: Chaotic Maps}

We consider three standard discrete-time chaotic benchmarks\citep{chaos}:
\emph{Coupled Lorenz}: a network of Lorenz oscillators with diffusive coupling between state variables, producing high-dimensional, synchronized--desynchronized regimes;
\emph{Hopfield map}: a discrete-time Hopfield network with frustrated connectivity (competing attractors) yielding complex transient dynamics;
\emph{MacArthur map}: a discrete-time ecological competition model (species competing for shared resources) exhibiting multi-species chaotic population fluctuations.
Each dataset provides multivariate sequences \(X\in\mathbb{R}^{N\times T}\) (channels \(=N\) nodes).

We use a sliding window of length \(T{=}3\) to forecast horizons \(H\in\{1,3,5\}\) (one-, three-, and five-step ahead).
Windows slide with stride 1. Data are split chronologically into \(80\%\) train\(+\)val and \(20\%\) test; within the first \(80\%\) we take \(80\%\) train and \(20\%\) validation.
Inputs are z-scored per sample across channels.
All graph-based models use a \emph{trainable} support \(W_C\) initialized from the long-term channel wise Pearson correlation over the \emph{training} split , and fuse instantaneous operators by Hadamard product; dynamic slices are re-normalized as \(D^{-\frac12}(A{+}I)D^{-\frac12}\).
All models use 1 convolution layer with the GVNN using a linear combination of the two node functions. We treat GTCNN as a simple baseline with it's spatial component being the \textbf{fixed} long-term correlation matrix and the rest of the models allow end to end training of the graph.
All models consist of a MLP readout layer with Leaky ReLU activation.

We train for 500 epochs with Adam (MSE loss), batch size 128, and report the best-validation checkpoint on test.
Unless otherwise stated, we use three seeds \(\{124,14,124235\}\) and hidden dimension 128.

\begin{table*}[!t]
\centering
\small
\caption{Chaotic maps forecasting: dataset-level hyperparameters (identical across maps).}
\label{tab:chaos-dataset-hparams}
\begin{tabular*}{\textwidth}{@{\extracolsep{\fill}}lcccccccc@{}}
\toprule
\textbf{Map} & \textbf{T} & \textbf{H} & \textbf{Stride} & \textbf{Split} & \textbf{Batch} & \textbf{Epochs} & \textbf{Seeds} & \textbf{Norm} \\
\midrule
Coupled Lorenz & 3 & 1, 3, 5 & 1 & 80/20 (chron.) & 128 & 500 & 124, 14, 124235 & per-sample z-score (channels) \\
Hopfield       & 3 & 1, 3, 5 & 1 & 80/20 (chron.) & 128 & 500 & 124, 14, 124235 & per-sample z-score (channels) \\
MacArthur      & 3 & 1, 3, 5 & 1 & 80/20 (chron.) & 128 & 500 & 124, 14, 124235 & per-sample z-score (channels) \\
\bottomrule
\end{tabular*}
\end{table*}

\begin{table*}[!t]
\centering
\small
\caption{Model hyperparameters and operator details for chaotic maps (all are graph-based; $W_C$ is \emph{trainable} and initialized from long-term correlation).}
\label{tab:chaos-model-hparams}
\begin{tabular*}{\textwidth}{@{\extracolsep{\fill}}lcccc@{}}
\toprule
\textbf{Model} & \textbf{LR} & \textbf{Hidden} & \textbf{Epochs} & \textbf{Trainable $W_C$} \\
\midrule
GTCNN                   & $1\times 10^{-4}$ & 128 & 500 & Yes \\
GVARMA (P{=}1, Q{=}1, K{=}2) & $1\times 10^{-4}$ & 128 & 500 & Yes \\
GGRNN                   & $1\times 10^{-4}$ & 128 & 500 & Yes \\
GVNN                    & $1\times 10^{-4}$ & 128 & 500 & Yes \\
\bottomrule
\end{tabular*}
\end{table*}

\subsection{Experimental Details: Traffic Forecasting (METR--LA \& PEMS--BAY)}

\begin{figure*}[!htbp]
   \centering
    \includegraphics[width=0.9\linewidth]{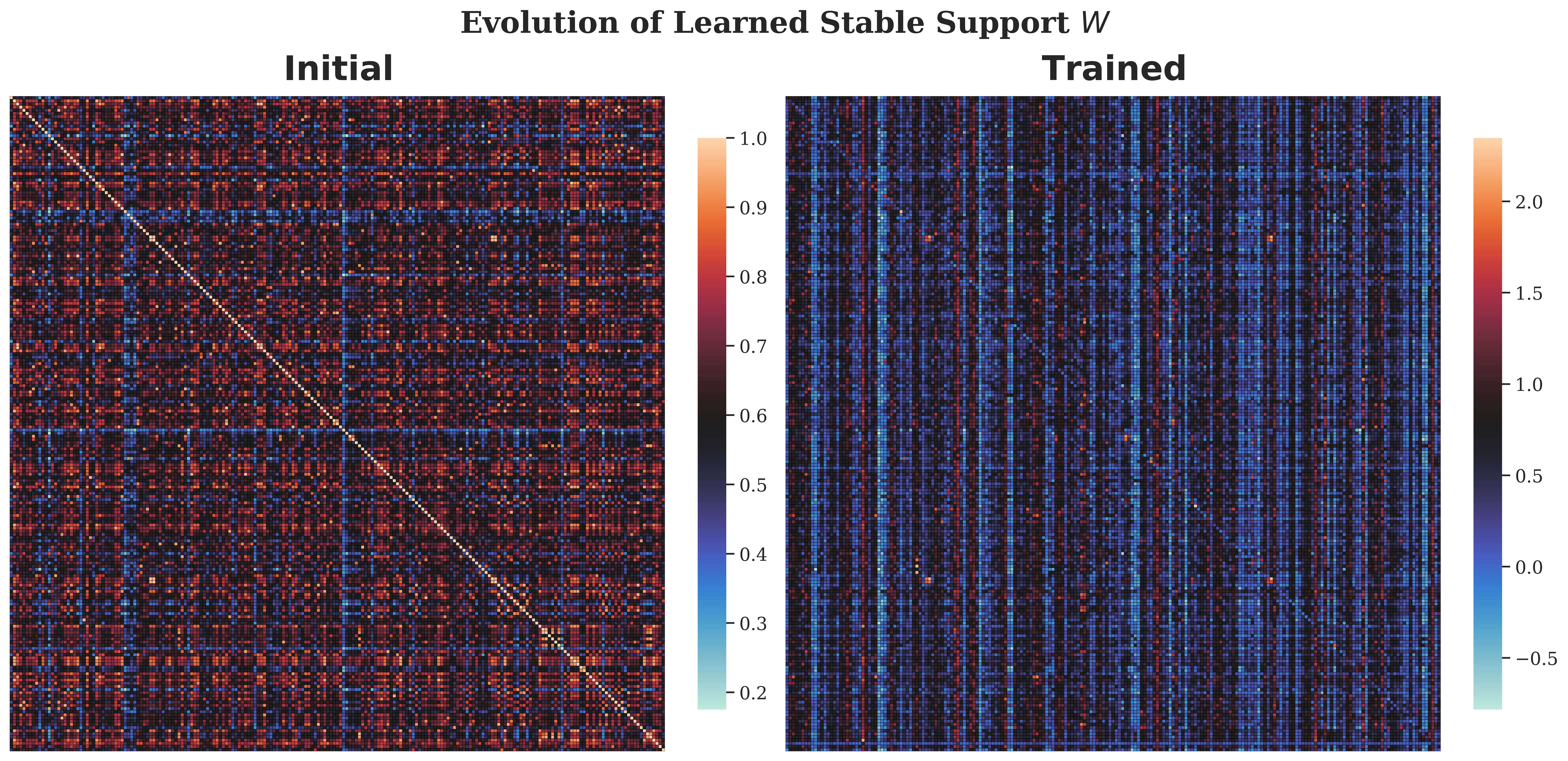}
    \caption{\textbf{Learned graph support matrix $W_C$ before and after training.} The figure illustrates how the static graph support matrix $W_C$ evolves through training. The left panel shows the initialized matrix, while the right panel presents the learned weights after optimization, revealing how the model adapts graph connectivity structure for improved forecasting.}
    \label{fig:learned_W_C}
\end{figure*}

We use a sliding window of \(T{=}6\) (30\,min) to forecast \(H\in\{3,6,12\}\) steps (15/30/60\,min). Data are split chronologically: \(80\%\) train\(+\)val and \(20\%\) test; within the first \(80\%\) we take \(80\%\) train and \(20\%\) validation. Inputs are z-scored \emph{per sample across channels}. Dynamic adjacencies are renormalized slice-wise as \(D^{-\frac12}(A{+}I)D^{-\frac12}\). We train with Adam and MSE loss for 200 epochs, select the best validation checkpoint, and evaluate on test. Runs use three seeds \(\{124,14,124235\}\). All models use 2 convolution layers with the GVNN having the LDE as the first layer and IC as second. The transformer and LSTM models also use 2 layers with the transformer only consisting of one attention head.

\begin{table*}[!t]
\centering
\small
\caption{Dataset-level hyperparameters (both model families run on both datasets; the \emph{only} dataset difference is batch size).}
\label{tab:traffic-dataset-hparams}
\begin{tabular*}{\textwidth}{@{\extracolsep{\fill}}lccccccc@{}}
\toprule
\textbf{Dataset} & \textbf{T} & \textbf{H} & \textbf{Split} & \textbf{Batch} & \textbf{Epochs} & \textbf{Seeds} & \textbf{Optimizer / Loss} \\
\midrule
METR--LA  & 6 & 3, 6, 12 & 80/20 (chron.) & 1280 & 200 & 124, 14, 124235 & Adam / MSE \\
PEMS--BAY & 6 & 3, 6, 12 & 80/20 (chron.) & 1024 & 200 & 124, 14, 124235 & Adam / MSE \\
\bottomrule
\end{tabular*}
\end{table*}

\paragraph{Graph construction (used by all graph-based models).}
We build a static channel similarity matrix \(W_C\in\mathbb{R}^{C\times C}\) from channelwise Pearson correlations over the full training set . Models that mark \(W_C\) as \emph{trainable} initialize from this correlation and update it end-to-end; otherwise \(W_C\) is fixed. All dynamic operators \(\Omega(t)\) are fused with \(W_C\) by Hadamard product and renormalized slice-wise.

\begin{table*}[!t]
\centering
\small
\caption{Model hyperparameters and operator details (applied identically on METR--LA and PEMS--BAY).}
\label{tab:traffic-model-hparams}
\begin{tabular*}{\textwidth}{@{\extracolsep{\fill}}l l c c c c@{}}
\toprule
\textbf{Family} & \textbf{Model} & \textbf{LR} & \textbf{Hidden} & \textbf{Epochs} & \textbf{Trainable $W_C$} \\ 
\midrule

\multirow{5}{*}{Graph-based}
& GVNN (fixed $W_C$) & $1\times 10^{-4}$ & 128 & 200 & No \\
& GTCNN              & $1\times 10^{-4}$ & 128 & 200 & No \\
& GVARMA (P{=}1, Q{=}1, K{=}2) & $1\times 10^{-4}$ & 128 & 200 & No \\
& GGRNN              & $1\times 10^{-4}$ & 128 & 200 & No \\
& GVNN (trainable $W_C$) & $1\times 10^{-4}$ & 128 & 200 & Yes \\

\midrule
\multirow{2}{*}{Sequence-based}
& LSTM (2 layers)                  & $1\times 10^{-3}$ & 128 & 200 & --- \\
& Transformer (1 head, 2 layers)   & $1\times 10^{-3}$ & 128 & 200 & --- \\
\bottomrule
\end{tabular*}
\end{table*}


\footnote{For both EEG datasets the Transformer and LSTM models consisted of two layers while the GVNN was 1 layer.}

\paragraph{PhysioNet MI (binary: T1 vs.\ T2).}
We load raw EDF files from per-participant folders \texttt{S\{001..109\}} =, excluding faulty IDs
\(\{\texttt{088},\texttt{089},\texttt{092},\texttt{100}\}\).
For each valid subject we select only the motor imagery runs \(\texttt{R04},\texttt{R08},\texttt{R12}\), read EDF with \texttt{mne}, and extract events from annotations.
We dynamically map the annotation codes for \(\texttt{T1}\) and \(\texttt{T2}\), keep only those trials, and epoch each trial with
\(t_{\min}{=}0\) to \(t_{\max}{=}3.1\,\mathrm{s}\) at \(160\,\mathrm{Hz}\) (496 samples).
Trials and labels are concatenated across all participants.
We then perform stratified \(K{=}5\)-fold CV across \emph{all} trials (pooled cross-subject), building the stable support
\(W_C\) \emph{within each fold from training windows only} as the absolute channelwise Pearson correlation
\(\left|{\rm corr}\right|\) .
All models receive inputs normalized per sample across channels (z-score), and all graph operators use slice-wise symmetric
renormalization \(D^{-1/2}(A{+}I)D^{-1/2}\).

\begin{table*}[!t]
\centering
\small
\caption{PhysioNet MI: dataset-level protocol and hyperparameters.}
\label{tab:physionet-protocol}
\begin{tabular*}{\textwidth}{@{\extracolsep{\fill}}lcccccccc@{}}
\toprule
\textbf{Trials} & \textbf{Classes} & \textbf{Epoch} & \textbf{FS} & \textbf{CV} & \textbf{Batch} & \textbf{Epochs} & \textbf{LR / WD} & \textbf{Metrics} \\
\midrule
pooled (all subj.) & 2 (T1/T2) & 3.1\,s ($T{=}496$) & 160\,Hz & strat.\ 5-fold & 64 & 50 & $10^{-3}$ / $10^{-4}$ & Acc \\
\bottomrule
\end{tabular*}
\end{table*}

\paragraph{BNCI2014\_001 (BCI~2a, 4-class).}

We use MOABB/Braindecode \citep{MOABB2023} to load all subjects (1..9).
Preprocessing: pick EEG, scale by \(10^6\), band-pass \(0.01\!-\!20\,\mathrm{Hz}\), exponential moving standardization
(\( \texttt{factor\_new}{=}10^{-3},\ \texttt{init\_block\_size}{=}1000 \)).
Windows are created from events with a start offset of \(-0.5\,\mathrm{s}\) (MOABB defaults for stop/length are used).
We concatenate windows across subjects and run stratified 5-fold CV.
In each fold, \(W_C=\left|{\rm corr}\right|\) is computed from training windows only, and used by GVNN;
inputs are per-sample channel z-scored inside each model \citep{bci2a}.

\begin{table*}[!t]
\centering
\small
\caption{BNCI2014\_001 (4-class): dataset-level protocol and hyperparameters.}
\label{tab:bci2a-protocol}
\begin{tabular*}{\textwidth}{@{\extracolsep{\fill}}lccccccc@{}}
\toprule
\textbf{Trials} & \textbf{Classes} & \textbf{Preproc} & \textbf{CV} & \textbf{Batch} & \textbf{Epochs} & \textbf{LR / WD} & \textbf{Metrics} \\
\midrule
pooled (all subj.) 
& 4 
& bp.\ $0.01{-}20$ Hz \& EMS 
& strat.\ 5-fold 
& 64 
& 100 
& $10^{-3}$ / $10^{-4}$ 
& Acc \\
\bottomrule
\end{tabular*}
\end{table*}

\subsection{Computational Complexity Analysis}
\label{subsec:complexity_analysis}

We compare two \emph{hypothetical} ways to realize signal-dependent graph convolution on inputs
$\mathbf{X}\in\mathbb{R}^{B\times C\times T}$, with a fixed spatial support
$\mathbf{W}_s\in\mathbb{R}^{C\times C}$ and temporal path adjacency $\mathbf{L}_T\in\mathbb{R}^{T\times T}$.

\begin{enumerate}[leftmargin=2.1em]
  \item \textbf{Naive Cartesian (Kronecker) Method.}
  For each sample $b$, compute per-time masked connectivity and then build a full spatiotemporal kernel
  by the Kronecker product with $\mathbf{L}_T$, yielding $\mathbf{K}_b\in\mathbb{R}^{(CT)\times(CT)}$, and apply $\mathbf{K}_b$ to $\hat{\mathbf{x}}_b$.
  
  \item \textbf{Proposed Graph-Variate Low-Rank Batched Method.}
  Construct rank-1 (\texttt{IC}) or \emph{rank-3 expanded quadratic} (\texttt{LDE}) connectivities on-the-fly,
  mask by $\mathbf{W}_s$ via Hadamard product, and perform $T$ batched mat--vecs \emph{without} explicit Kronecker expansion.
\end{enumerate}

\subsubsection*{1.\ Naive Product Graph Method}
For each $b=1,\dots,B$ and $t=1,\dots,T$:
\begin{enumerate}[label=(\alph*),leftmargin=1.8em]
  \item \emph{Per-time connectivity \& masking:}
  \begin{align}
    \mathbf{S}_{b,t}^{\text{IC}} &= \mathbf{x}_{b,:,t}\,\mathbf{x}_{b,:,t}^{\top}, \\
    \mathbf{S}_{b,t}^{\text{LDE}} &= (\mathbf{x}_{b,:,t}\!\odot \mathbf{x}_{b,:,t})\,\mathbf{1}^{\top} \notag \\
                         &\quad + \mathbf{1}\,(\mathbf{x}_{b,:,t}\!\odot \mathbf{x}_{b,:,t})^{\top} \notag \\
                         &\quad - 2\,\mathbf{x}_{b,:,t}\,\mathbf{x}_{b,:,t}^{\top}, \\
    \widetilde{\mathbf{S}}_{b,t} &= \mathbf{S}_{b,t}\circ \mathbf{W}_s.
  \end{align}
  
  \item \emph{Stacking:}
  \begin{equation}
    \widetilde{\mathbf{S}}_b = [\widetilde{\mathbf{S}}_{b,1},\ldots,\widetilde{\mathbf{S}}_{b,T}] \in \mathbb{R}^{C\times C\times T}.
  \end{equation}
  
  \item \emph{Kronecker expansion \& apply:}
  \begin{align}
    \mathbf{K}_b &= \mathbf{L}_T \otimes \widetilde{\mathbf{S}}_b \in \mathbb{R}^{(CT)\times(CT)}, \\
    \hat{\mathbf{y}}_b &= \mathbf{K}_b\,\hat{\mathbf{x}}_b.
  \end{align}
\end{enumerate}

\paragraph{Complexity.}
Per-time connectivity: $O(BC^2T)$; kernel formation: $O(BC^2T^2)$; application: $O(BC^2T^2)$;
memory for all $\mathbf{K}_b$: $O(BC^2T^2)$. Net time: $O(BC^2T^2)$; memory: $O(BC^2T^2)$.

\subsubsection*{2.\ Proposed Graph-Variate Low-Rank Batched Method}
Form, for all $(b,t)$ in parallel:
\begin{align}
  \mathbf{J}_{b,t}^{\text{IC}} &= \mathbf{x}_{b,:,t}\,\mathbf{x}_{b,:,t}^{\top}, \\
  \mathbf{J}_{b,t}^{\text{LDE}} &= (\mathbf{x}_{b,:,t}\!\odot \mathbf{x}_{b,:,t})\,\mathbf{1}^{\top} \notag \\
                       &\quad + \mathbf{1}\,(\mathbf{x}_{b,:,t}\!\odot \mathbf{x}_{b,:,t})^{\top} \notag \\
                       &\quad - 2\,\mathbf{x}_{b,:,t}\,\mathbf{x}_{b,:,t}^{\top},
\end{align}
then mask with $\mathbf{W}_s$: $\boldsymbol{\Omega}_{b,t}=\mathbf{J}_{b,t}\circ \mathbf{W}_s$ (using the appropriate case).
All $T$ masked matrices live implicitly inside $\boldsymbol{\Omega}\in\mathbb{R}^{B\times C\times C\times T}$.
We then compute, in one batched call,
\begin{equation}
  \mathbf{y}_{b,:,t}=\boldsymbol{\Omega}_{b,t}\,\mathbf{x}_{b,:,t},
\end{equation}
vectorizing over $b$ and $t$.

\paragraph{Complexity.}
Connectivity and masking: $O(BC^2T)$; $T$ batched mat--vecs: $O(BC^2T)$; memory: $O(BC^2T)$.
Net time: $O(BC^2T)$; memory: $O(BC^2T)$.

\begin{table}[t]
\centering
\small
\caption{Asymptotic comparison: naive Cartesian vs.\ batched low-rank.}
\label{tab:complexity_summary}
\begin{tabular}{@{}lcc@{}}
\toprule
\textbf{Method} & \textbf{Time} & \textbf{Memory} \\
\midrule
Naive Cartesian & $O(BC^2T^2)$ & $O(BC^2T^2)$ \\
Batched Low-Rank (ours) & $O(BC^2T)$ & $O(BC^2T)$ \\
\bottomrule
\end{tabular}
\end{table}

\begin{tcolorbox}[colback=blue!3!white,colframe=blue!40!black,boxrule=0.4pt,width=\columnwidth]
\textbf{Takeaway.}
Avoiding explicit Kronecker formation with $\mathbf{L}_T$ removes the quadratic dependence on $T$ in both
compute and memory. Using rank-1 (\texttt{IC}) and rank-3 expanded quadratic (\texttt{LDE}) constructions,
plus Hadamard masking and batched mat--vecs, yields linear $O(BC^2T)$ execution. 
\end{tcolorbox}
\subsubsection*{Main Convolution}

\begin{lstlisting}[style=pytiny,basicstyle=\ttfamily\scriptsize,breaklines=true,caption={Core PyTorch implementation.}]
import torch

EPS = 1e-5

def renormalize_dynamic(A, eps=EPS):
    """
    A: (B, C, C, T) dynamic affinity
    Returns symmetric renorm: D^{-1/2}(A+I)D^{-1/2}
    """
    I = torch.eye(A.size(1), device=A.device)
    I = I[None, :, :, None]  # (1, C, C, 1)
    At = A + I
    deg = At.sum(2, keepdim=True)  # (B, C, 1, T)
    inv = deg.clamp(min=eps).pow(-0.5)
    S = inv * At * inv.transpose(1, 2)
    return S

def graph_variate(x, fun='corr', Zave=True, eps=EPS, W):
    """
    x: (B, C, T)
    returns normalized dynamic adjacency Om: (B,C,C,T)
    """
    B, C_, T_ = x.shape
    if Zave:
        mu = x.mean(1, keepdim=True)
        sig = x.std(1, keepdim=True, unbiased=True)
        x = (x - mu) / (sig + eps)

    if fun == 'sqd':
        Om = (x.unsqueeze(2) - x.unsqueeze(1)).pow(2)
    elif fun == 'corr':
        D = x - x.mean(2, keepdim=True)
        Om = D.unsqueeze(2) * D.unsqueeze(1)

    return W*Om

def graph_conv(x, Om):
    """
    x:  (B, C, T)
    Om: (B, C, C, T) dynamic adjacency
    returns: (B, C, T)
    """
    Om_t = renormalize_dynamic(Om.permute(0, 3, 1, 2))   # (B, T, C, C)
    sig_t = x.permute(0, 2, 1).unsqueeze(-1)
    out = torch.matmul(Om_t, sig_t).squeeze(-1)
    return out.permute(0, 2, 1)
\end{lstlisting}

\noindent In practice, we build $\Omega$ via \texttt{graph\_variate}, apply the mask (Hadamard with $W_s$), optionally call \texttt{renormalize\_dynamic} slice-wise, and then use \texttt{graph\_conv} to perform all $BT$ mat–vecs in one call—achieving $O(BC^2T)$ time and memory.

\subsection{Rank-Lifting of the LDE Connectivity Profile}
\label{rankliftlde}

\begin{figure*}[t]
  \centering
  \includegraphics[width=\textwidth]{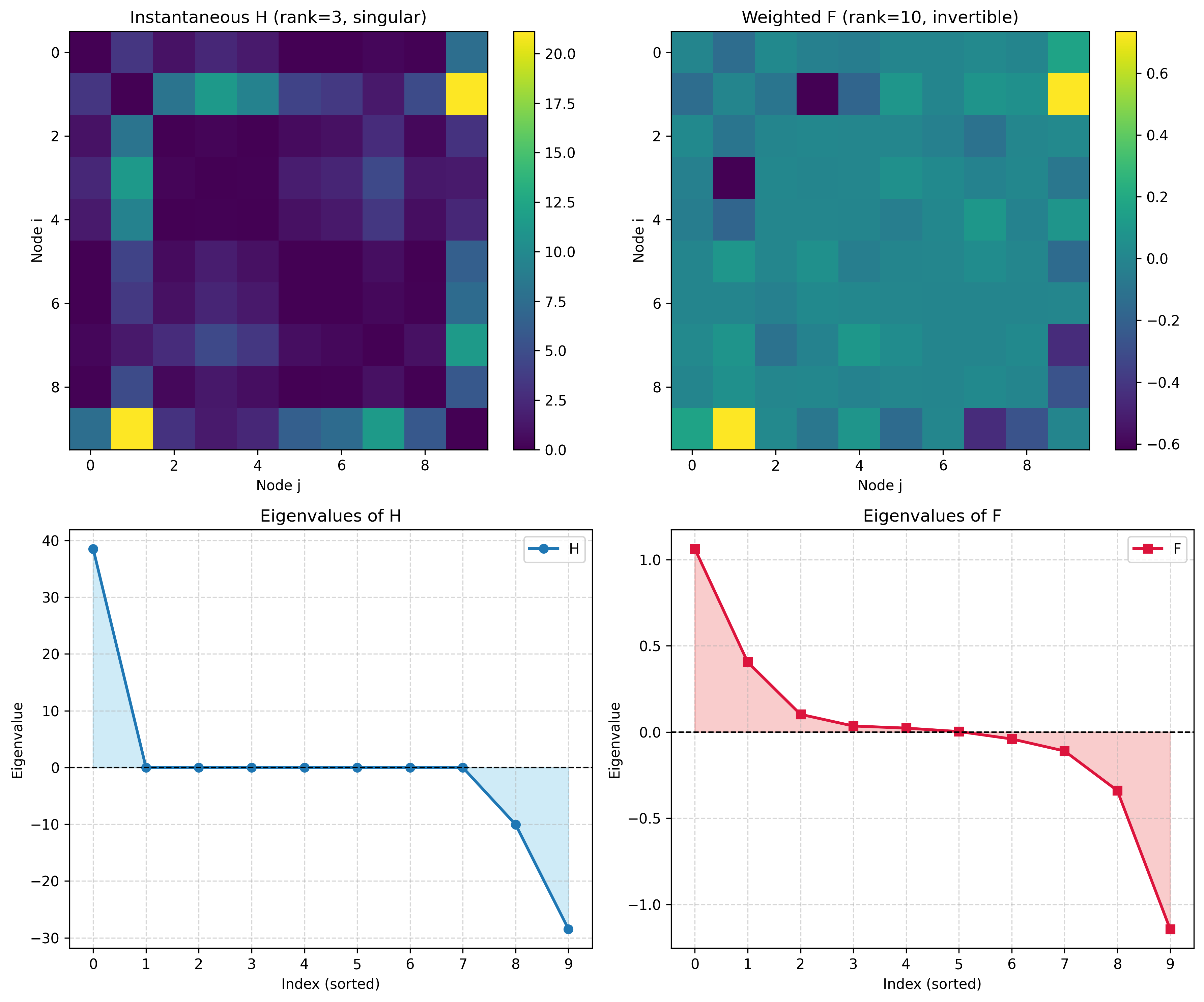}
  \caption{
    \textbf{Top-left:} The instantaneous squared-difference matrix $\mathbf{J}$ at a single time point, which is rank-deficient ($\mathrm{rank}=3$) and singular.
    \textbf{Top-right:} The Hadamard-weighted matrix $\boldsymbol{\Omega} = \mathbf{W}\circ \mathbf{J}$, where $\mathbf{W}$ is the long-term correlation; weighting lifts $\mathbf{J}$ to full rank ($\mathrm{rank}=10$) and makes $\boldsymbol{\Omega}$ invertible.
    \textbf{Bottom-left:} Sorted eigenvalues of $\mathbf{J}$, displaying exactly three nonzero modes and seven zeros.
    \textbf{Bottom-right:} Sorted eigenvalues of $\boldsymbol{\Omega}$, all ten nonzero and of mixed sign, confirming that $\boldsymbol{\Omega}$ is indefinite but invertible.
  }
  \label{fig:connectivity_summary}
\end{figure*}

\begin{theorem}
\label{thm:lde_rank_lifting}
Let $x_1,\dots,x_N$ be $N$ distinct real numbers and define the instantaneous squared-difference matrix
\[
\mathbf{J}(t)\in\mathbb{R}^{N\times N},
\quad
J_{ij}(t)=(x_i - x_j)^2,
\quad
J_{ii}(t)=0.
\]
Let 
\[
\mathcal{W} = \bigl\{\,\mathbf{W}\in\mathbb{R}^{N\times N}:w_{ij}\neq0 \text{ for all } i\neq j\bigr\},
\]
and for each $\mathbf{W}\in\mathcal{W}$ form the Hadamard product 
\[
\boldsymbol{\Omega}(t) = \mathbf{J}(t)\circ \mathbf{W},
\qquad 
\Omega_{ij}=J_{ij}(t)\,w_{ij}.
\]
Then:
\begin{enumerate}
  \item $\mathrm{rank}(\mathbf{J})\le3$, hence $\det(\mathbf{J}(t))=0$ and $\mathbf{J}$ is singular.
  \item The determinant
    \[
      P(w_{12},w_{13},\dots,w_{N-1,N})
      =\det\bigl(\mathbf{J}\circ \mathbf{W}\bigr)
    \]
    is a nonzero polynomial in the off-diagonal entries of $\mathbf{W}$. Consequently, outside its algebraic zero-locus of Lebesgue measure~0,
    \[
      \det(\mathbf{J}(t)\circ \mathbf{W})\neq0,
      \quad
      \mathrm{rank}(\mathbf{J}(t)\circ \mathbf{W})=N,
    \]
    so the Hadamard-weighted matrix is generically invertible.
\end{enumerate}
\end{theorem}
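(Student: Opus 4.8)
The plan is to prove the two parts in sequence, settling the rank bound by an explicit low-rank decomposition and then establishing genericity through a non-vanishing-polynomial argument. For Part~1, I would expand $(x_i-x_j)^2 = x_i^2 - 2x_ix_j + x_j^2$ and read off the decomposition directly. Writing $\mathbf{x}=(x_1,\dots,x_N)^\top$, $\mathbf{s}=(x_1^2,\dots,x_N^2)^\top$, and $\mathbf{1}$ for the all-ones vector, this identity gives
\[
  J(t) = \mathbf{s}\,\mathbf{1}^\top - 2\,\mathbf{x}\,\mathbf{x}^\top + \mathbf{1}\,\mathbf{s}^\top ,
\]
which exhibits $J(t)$ as a sum of three rank-one matrices (the vanishing diagonal is automatic, since $x_i^2-2x_i^2+x_i^2=0$). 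Hence $\operatorname{rank}(J(t))\le 3$, and whenever $N>3$ this forces $\det(J(t))=0$, i.e.\ singularity. Here I read the symbol $D$ in the statement as $J(t)$.

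For Part~2 the strategy is to show that $P(\{C_{ij}\})=\det\!\bigl(J(t)\circ C\bigr)$ is not the zero polynomial by exhibiting non-cancelling monomials, and then to invoke the standard measure-zero fact for real polynomials. Using the Leibniz expansion and the vanishing diagonal of $J(t)\circ C$, only derangements $\sigma\in S_N$ (fixed-point-free permutations) contribute:
\[
  P = \sum_{\sigma\text{ derangement}} \sgn(\sigma)\,\Bigl[\prod_{i=1}^N (x_i-x_{\sigma(i)})^2\Bigr]\,\prod_{i=1}^N C_{i,\sigma(i)} .
\]
Treating the off-diagonal entries $\{C_{ij}\}_{i\ne j}$ as independent variables, the monomial $\prod_i C_{i,\sigma(i)}$ records the ordered pairs $(i,\sigma(i))$ and therefore determines $\sigma$ uniquely, so distinct derangements yield distinct monomials and no cancellation can occur.

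The decisive step is then to verify each surviving coefficient is nonzero. Because the $x_i$ are distinct and $\sigma(i)\ne i$ in any derangement, each factor $(x_i-x_{\sigma(i)})^2>0$, so the coefficient of $\prod_i C_{i,\sigma(i)}$ equals $\pm\prod_i (x_i-x_{\sigma(i)})^2\ne 0$. Since a derangement exists for every $N\ge 2$, $P$ has at least one nonzero coefficient and is a genuinely nonzero polynomial in the $N(N-1)$ off-diagonal variables. I would then invoke the standard fact that the real zero set of a nonzero polynomial has Lebesgue measure zero: outside this algebraic locus one has $\det\!\bigl(J(t)\circ C\bigr)\ne 0$, hence $\operatorname{rank}\!\bigl(J(t)\circ C\bigr)=N$, establishing generic invertibility.

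The main obstacle I anticipate is the non-cancellation claim, which hinges on whether the entries of $C$ are treated as free or symmetric. Under the stated definition of $\mathcal W$ (all of $\R^{N\times N}$ with nonzero off-diagonals) the entries are free and the distinct-monomial argument closes immediately. If one instead wished to restrict to \emph{symmetric} supports, the bookkeeping is subtler, since several permutations can share the same unordered edge set; the terms nevertheless fail to cancel because any permutations contributing a common monomial carry the same sign (for instance a cycle and its inverse). I would flag this as the only point requiring extra care, and under the general-$C$ reading adopted here it does not arise.
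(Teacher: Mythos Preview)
Your proposal is correct and essentially matches the paper's proof: the same rank-$3$ decomposition $J(t)=\mathbf{s}\,\mathbf{1}^\top-2\,\mathbf{x}\,\mathbf{x}^\top+\mathbf{1}\,\mathbf{s}^\top$ for Part~1, and the Leibniz expansion restricted to derangements for Part~2. The only cosmetic difference is that the paper exhibits a single explicit witness permutation (the $N$-cycle $\pi_0:i\mapsto i+1\pmod N$, with coefficient $\pm\prod_i(x_i-x_{i+1})^2\ne0$) to certify $P\not\equiv0$, whereas you argue more broadly that all derangements contribute distinct, non-cancelling monomials; either route suffices.
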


\begin{proof}
\textbf{(1) $\mathrm{rank}(\mathbf{J}(t))\le3$.} Define column-vectors in $\mathbb{R}^N$ by
\[
u_i = x_i^2(t),
\quad
v_i = x_i(t),
\quad
\mathbf{1}_i = 1.
\]
Then
\[
J_{ij}(t) = (x_i(t) - x_j(t))^2
       = u_i\,\mathbf{1}_j - 2\,v_i\,v_j + \mathbf{1}_i\,u_j,
\]
so in matrix form
\[
\mathbf{J}(t)= \mathbf{u}\,\mathbf{1}^\top - 2\,\mathbf{v}\,\mathbf{v}^\top + \mathbf{1}\,\mathbf{u}^\top.
\]
Each term on the right is rank~1, hence
$\mathrm{rank}(\mathbf{J}(t))\le1+1+1=3$. In particular when $N>3$, $\mathbf{J}(t)$ is singular and $\det(\mathbf{J}(t))=0$.

\medskip
\textbf{(2) $\det(\mathbf{J}(t)\circ \mathbf{W})$ is a nonzero polynomial.}  
By the Leibniz formula,
\begin{align*}
\det(\boldsymbol{\Omega}(t))
&= \sum_{\pi \in S_N} \mathrm{sgn}(\pi)\, \prod_{i=1}^N \Omega_{i,\pi(i)}(t) \\
&= \sum_{\substack{\pi \in S_N \\ \pi(i) \ne i\; \forall i}}
    \mathrm{sgn}(\pi)\, \prod_{i=1}^N \bigl[J_{i,\pi(i)}(t)\, w_{i,\pi(i)}\bigr],
\end{align*}
since $\Omega_{ii}(t)=0$ kills any term with a fixed point. Thus
\begin{align}
&\det(\mathbf{J}(t)\circ \mathbf{W}) \notag \\
&= \sum_{\substack{\pi\in S_N\\\pi(i)\neq i}}
\Bigl(\mathrm{sgn}(\pi)\prod_{i=1}^N J_{i,\pi(i)}(t)\Bigr)
\Bigl(\prod_{i=1}^N w_{i,\pi(i)}\Bigr),
\end{align}
a multivariate polynomial $P(\{w_{ij}\})$ in the off-diagonal $w_{ij}$.

To show $P\not\equiv0$, pick the $N$-cycle
$\pi_0\colon i\mapsto i+1\pmod N$. Its monomial is
\[
\prod_{i=1}^N w_{i,\pi_0(i)}
= w_{1,2}\,w_{2,3}\cdots w_{N-1,N}\,w_{N,1},
\]
and its coefficient is
\begin{align}
&\mathrm{sgn}(\pi_0)\,\prod_{i=1}^N J_{i,\pi_0(i)}(t) \notag \\
&\quad =\pm\,(x_1-x_2)^2(x_2-x_3)^2\cdots(x_N-x_1)^2 \neq 0
\end{align}
because the $x_i$ are distinct. Hence $P$ has at least one nonzero coefficient and so is not the zero polynomial. Therefore it vanishes only on a proper hypersurface in~$\mathcal{W}$, proving that for almost every full-support $\mathbf{W}$,  
$\det(\mathbf{J}(t)\circ \mathbf{W})\neq0$ and $\mathrm{rank}(\mathbf{J}(t)\circ \mathbf{W})=N$.
\end{proof}
\subsection{Stability of GVNN Layer}

\begin{theorem}[GVNN Layer is Globally Lipschitz]
\label{thm:gvnn_lipschitz}
Let $\mathbf{W}\in\mathbb{R}^{N\times N}$ be symmetric with nonnegative entries, and define
\[
\alpha = \max_{1\le i\le N}\sum_{j=1}^N w_{ij}.
\]
Let 
\[
\mathbf{X} = [\,\mathbf{x}(1)\;\dots\;\mathbf{x}(T)\,] \in \mathbb{R}^{N\times T},
\]
and write
\begin{align*}
\mu_i 
&= \frac{1}{T} \sum_{t=1}^T x_i(t), \\[4pt]
M 
&= \max_{\substack{1 \le i \le N \\ 1 \le t \le T}} | x_i(t) - \mu_i |, \\[4pt]
B 
&= \max_{\substack{1 \le i \le N \\ 1 \le t \le T}} | x_i(t) |.
\end{align*}

Let scalar filters $\mathbf{a}=(a_t)_{t=1}^T$ and $\mathbf{b}=(b_t)_{t=1}^T$ satisfy
\[
a^{\star}=\max_{1\le t\le T}|a_t|,
\quad
b^{\star}=\max_{1\le t\le T}|b_t|.
\]
For each $t$ define two node functions:
\begin{align}
J^{\mathrm{IC}}_{ij}(t) &= |(x_i(t)-\mu_i)(x_j(t)-\mu_j)|, \\
J^{\mathrm{LDE}}_{ij}(t) &= (x_i(t)-x_j(t))^2,
\end{align}
and form the Hadamard products
\begin{align}
\boldsymbol{\Omega}^{\mathrm{IC}}(t) &= \mathbf{W}\circ \mathbf{J}^{\mathrm{IC}}(t), \\
\boldsymbol{\Omega}^{\mathrm{LDE}}(t) &= \mathbf{W}\circ \mathbf{J}^{\mathrm{LDE}}(t).
\end{align}
Given any pointwise-1-Lipschitz nonlinearity $\sigma:\mathbb{R}\to\mathbb{R}$, define
\begin{align}
\mathbf{y}(t) &= \sigma\bigl(a_t\,\mathbf{x}(t) + b_t\,\boldsymbol{\Omega}(t)\,\mathbf{x}(t)\bigr), \\
F(\mathbf{X}) &= [\,\mathbf{y}(1)\;\dots\;\mathbf{y}(T)] \in \mathbb{R}^{N\times T}.
\end{align}
Then for every pair $\mathbf{X},\mathbf{X}'\in\mathbb{R}^{N\times T}$:
\begin{align}
&\|F(\mathbf{X})-F(\mathbf{X}')\|_{F} \notag \\
&\quad \le
(a^{\star} + \alpha b^{\star} M^2)\|\mathbf{X}-\mathbf{X}'\|_{F}
\;\;(\mathrm{IC}), \\
&\|F(\mathbf{X})-F(\mathbf{X}')\|_{F} \notag \\
&\quad \le
(a^{\star} + 4\alpha b^{\star} B^2)\|\mathbf{X}-\mathbf{X}'\|_{F}
\;\;(\mathrm{LDE}).
\end{align}
\end{theorem}

\begin{proof}
Because $\mathbf{W}$ is symmetric with nonnegative entries, Gershgorin's circle theorem guarantees that every eigenvalue $\lambda$ of $\mathbf{W}$ lies in the interval $[0,\alpha]$. Consequently, the spectral (operator) norm of $\mathbf{W}$ satisfies
\[
\|\mathbf{W}\|_{\mathrm{op}} \le \alpha.
\]

Fix an arbitrary time index $t\in\{1,\dots,T\}$. We treat the IC and LDE cases in parallel, noting only where the node function definition differs.

\paragraph{IC case.}
Define the diagonal matrix
\[
\mathbf{D}(t) = \mathrm{diag}\bigl(|x_i(t)-\mu_i|\bigr)_{i=1}^N.
\]
By definition of $M$,
\[
\|\mathbf{D}(t)\|_{\mathrm{op}}
= \max_{1\le i\le N}|x_i(t)-\mu_i|
\le M.
\]
Since the Hadamard product with $\mathbf{J}^{\mathrm{IC}}(t)$ coincides with the congruence
\[
\boldsymbol{\Omega}^{\mathrm{IC}}(t) = \mathbf{D}(t)\,\mathbf{W}\,\mathbf{D}(t),
\]
submultiplicativity of the operator norm yields
\begin{align*}
&\|\boldsymbol{\Omega}^{\mathrm{IC}}(t)\|_{\mathrm{op}} \\
&\quad \le \|\mathbf{D}(t)\|_{\mathrm{op}} \cdot \|\mathbf{W}\|_{\mathrm{op}} \cdot \|\mathbf{D}(t)\|_{\mathrm{op}} \\
&\quad \le M \cdot \alpha \cdot M = \alpha M^2.
\end{align*}

\paragraph{LDE case.}
Here each entry of the instantaneous matrix is $(x_i(t)-x_j(t))^2$. We bound this directly in terms of the maximum node value $B$:
\begin{align*}
(x_i(t) - x_j(t))^2 
&= |x_i(t) - x_j(t)|^2 \\
&\le (|x_i(t)| + |x_j(t)|)^2 \\
&\le (B + B)^2 = 4B^2.
\end{align*}

Therefore, for every $i, j$,
\[
|\Omega^{\mathrm{LDE}}_{ij}(t)|
= w_{ij}(x_i(t) - x_j(t))^2
\le 4B^2 w_{ij}.
\]

Summing over $j$ shows that each row sum of $|\boldsymbol{\Omega}^{\mathrm{LDE}}(t)|$ is at most
\[
\sum_{j}4B^2 w_{ij} = 4B^2\sum_j w_{ij} \le 4\alpha B^2.
\]
Since $\boldsymbol{\Omega}^{\mathrm{LDE}}(t)$ remains symmetric with nonnegative entries, its operator norm is upper bounded by its maximum row sum:
\begin{align*}
&\|\boldsymbol{\Omega}^{\mathrm{LDE}}(t)\|_{\mathrm{op}}
= \rho\bigl(\boldsymbol{\Omega}^{\mathrm{LDE}}(t)\bigr) \\
&\quad \le \max_{1\le i\le N}\sum_{j=1}^N \Omega^{\mathrm{LDE}}_{ij}(t)
\le 4\alpha B^2.
\end{align*}

\paragraph{Lipschitz bound.}
In either case define the map $g_t:\mathbb{R}^N\to\mathbb{R}^N$ by
\[
g_t(\mathbf{z}) = a_t\,\mathbf{z} + b_t\,\boldsymbol{\Omega}(t)\,\mathbf{z}.
\]
For any two vectors $\mathbf{u},\mathbf{v}\in\mathbb{R}^N$, we have
\[
g_t(\mathbf{u})-g_t(\mathbf{v})
= \bigl(a_t\mathbf{I} + b_t\boldsymbol{\Omega}(t)\bigr)(\mathbf{u}-\mathbf{v}).
\]
Applying the triangle inequality together with the operator-norm bound on $\boldsymbol{\Omega}(t)$ yields
\begin{align*}
&\|g_t(\mathbf{u})-g_t(\mathbf{v})\|_2 \\
&\quad \le |a_t|\|\mathbf{u}-\mathbf{v}\|_2
 + |b_t|\|\boldsymbol{\Omega}(t)\|_{\mathrm{op}}\|\mathbf{u}-\mathbf{v}\|_2.
\end{align*}
Since $|a_t|\le a^\star$ and $|b_t|\le b^\star$, it follows that
\begin{align*}
&\|g_t(\mathbf{u})-g_t(\mathbf{v})\|_2 \\
&\quad \le
(a^\star + b^\star\|\boldsymbol{\Omega}(t)\|_{\mathrm{op}})\|\mathbf{u}-\mathbf{v}\|_2.
\end{align*}

\paragraph{Combining with nonlinearity.}
Because $\sigma$ is pointwise 1-Lipschitz, for each $t$ and each pair of signals $\mathbf{x}(t), \mathbf{x}'(t)$,
\begin{align*}
&\|\mathbf{y}(t) - \mathbf{y}'(t)\|_2 \\
&\quad = \|\sigma(g_t(\mathbf{x}(t))) - \sigma(g_t(\mathbf{x}'(t)))\|_2 \\
&\quad \le \|g_t(\mathbf{x}(t)) - g_t(\mathbf{x}'(t))\|_2 \\
&\quad \le L\|\mathbf{x}(t) - \mathbf{x}'(t)\|_2,
\end{align*}
where
\[
L =
\begin{cases}
a^\star + \alpha b^{\star} M^2, & \text{IC}, \\[4pt]
a^\star + 4\alpha b^{\star} B^2, & \text{LDE}.
\end{cases}
\]

Finally, summing these squared-norm inequalities over $t = 1, \dots, T$ and taking the square root gives:
\begin{align*}
&\|F(\mathbf{X}) - F(\mathbf{X}')\|_F \\
&\quad = \left(\sum_{t=1}^T \|\mathbf{y}(t) - \mathbf{y}'(t)\|_2^2\right)^{1/2} \\
&\quad \le L \left(\sum_{t=1}^T \|\mathbf{x}(t) - \mathbf{x}'(t)\|_2^2\right)^{1/2} \\
&\quad = L\|\mathbf{X} - \mathbf{X}'\|_F.
\end{align*}
This completes the proof.
\end{proof}
\subsection{Extended Theorems and Proofs}

This section includes additional theorems and propositions for completeness.

\begin{theorem}[IC Spectral bounds under amplitude-scaling]
\label{thm:amplitude_scaling_bounds}
Let $\mathbf{W} \in \mathbb{S}_{++}^N$ have spectrum
$\lambda_{\min}(\mathbf{W})\le \dots \le \lambda_{\max}(\mathbf{W})$,
and at time $t$ let the centred sample $\tilde{\mathbf{x}}_t\in\mathbb{R}^N$ satisfy 
$\tilde{x}_{i}(t)\neq0$ for all $i$. Define
\begin{align}
  \mathbf{D}_t &= \diag\!\bigl(|\tilde{\mathbf{x}}_t|\bigr), \quad
  \boldsymbol{\rho}_t = \mathbf{D}_t\,\mathbf{W}\,\mathbf{D}_t, \\
  m_t &= \min_i|\tilde{x}_i(t)|, \quad
  M_t = \max_i|\tilde{x}_i(t)|.
\end{align}
If $\delta_{1,t}\le\cdots\le\delta_{N,t}$ are the eigenvalues of $\boldsymbol{\rho}_t$,
then for each $i=1,\dots,N$,
\[
  m_t^2\,\lambda_{\min}(\mathbf{W})
  \;\le\;
  \delta_{i,t}
  \;\le\;
  M_t^2\,\lambda_{\max}(\mathbf{W}).
\]
\end{theorem}

\begin{proof}
Recall the Rayleigh quotient of a symmetric matrix $\mathbf{A}$ and nonzero $\mathbf{w}$ is
\[
  \mathcal{R}(\mathbf{A};\mathbf{w})
  := \frac{\mathbf{w}^\top \mathbf{A}\,\mathbf{w}}{\mathbf{w}^\top \mathbf{w}}.
\]
By the Rayleigh--Ritz theorem (a special case of the Courant--Fischer min--max theorem),
\[
  \lambda_{\min}(\mathbf{A})
  \le
  \mathcal{R}(\mathbf{A};\mathbf{w})
  \le
  \lambda_{\max}(\mathbf{A}),
  \quad
  \forall\,\mathbf{w}\neq\mathbf{0},
\]
and the eigenvalues of $\mathbf{A}$ coincide with the extremal values of $\mathcal{R}(\mathbf{A};\mathbf{w})$ over appropriate subspaces.

For any unit vector $\mathbf{v}\in\mathbb{R}^N$ ($\|\mathbf{v}\|=1$), consider
\[
  \mathbf{v}^\top \boldsymbol{\rho}_t\,\mathbf{v}
  = \mathbf{v}^\top (\mathbf{D}_t\,\mathbf{W}\,\mathbf{D}_t)\,\mathbf{v}
  = (\mathbf{D}_t\,\mathbf{v})^\top \mathbf{W}\,(\mathbf{D}_t\,\mathbf{v}).
\]
We bound $(\mathbf{D}_t \mathbf{v})^\top \mathbf{W}\,(\mathbf{D}_t \mathbf{v})$ using $\mathcal{R}(\mathbf{W};\cdot)$.

Define $\mathbf{u} = \mathbf{D}_t \mathbf{v} / \|\mathbf{D}_t \mathbf{v}\|$, so $\mathbf{u}\neq\mathbf{0}$ and $\|\mathbf{u}\|=1$. Then
\[
  (\mathbf{D}_t\,\mathbf{v})^\top \mathbf{W}\,(\mathbf{D}_t\,\mathbf{v})
  = \|\mathbf{D}_t\,\mathbf{v}\|^2 \,\mathcal{R}(\mathbf{W};\mathbf{u}).
\]
By the Rayleigh--Ritz result,
\[
  \lambda_{\min}(\mathbf{W})\le\mathcal{R}(\mathbf{W};\mathbf{u})\le\lambda_{\max}(\mathbf{W}),
\]
so
\begin{align}
  (\mathbf{D}_t \mathbf{v})^\top \mathbf{W}\,(\mathbf{D}_t \mathbf{v})
  &\in \bigl[\lambda_{\min}(\mathbf{W})\,\|\mathbf{D}_t \mathbf{v}\|^2, \notag \\
  &\qquad \lambda_{\max}(\mathbf{W})\,\|\mathbf{D}_t \mathbf{v}\|^2\bigr].
\end{align}

Since $\mathbf{v}$ has $\|\mathbf{v}\|=1$ and $\mathbf{D}_t = \diag(d_{1,t},\dots,d_{N,t})$ with $d_{i,t}=|\tilde{x}_i(t)|\in [m_t,M_t]$, we have
\[
  \|\mathbf{D}_t \mathbf{v}\|^2
  = \sum_{i=1}^N d_{i,t}^2\,v_i^2
  \in [\,m_t^2,\;M_t^2\,].
\]
Therefore for every unit $\mathbf{v}$,
\begin{align}
  \mathbf{v}^\top \boldsymbol{\rho}_t\,\mathbf{v}
  &= (\mathbf{D}_t \mathbf{v})^\top \mathbf{W}\,(\mathbf{D}_t \mathbf{v}) \notag \\
  &\in \bigl[m_t^2\,\lambda_{\min}(\mathbf{W}),\;M_t^2\,\lambda_{\max}(\mathbf{W})\bigr].
\end{align}

Finally, by the Courant--Fischer characterization of eigenvalues, the $i$th largest eigenvalue $\delta_{i,t}$ of $\boldsymbol{\rho}_t$ is the extremal Rayleigh quotient over an $i$-dimensional subspace. Since \emph{all} Rayleigh quotients lie in $[m_t^2\lambda_{\min}(\mathbf{W}),\,M_t^2\lambda_{\max}(\mathbf{W})]$, each $\delta_{i,t}$ must also satisfy
\[
  m_t^2\,\lambda_{\min}(\mathbf{W})
  \le
  \delta_{i,t}
  \le
  M_t^2\,\lambda_{\max}(\mathbf{W}),
\]
for $i=1,\dots,N$. This completes the proof.
\end{proof}

\begin{theorem}[IC Condition-number bound under amplitude-scaling]
\label{thm:condition_number_bound}
Under the hypotheses of Theorem~\ref{thm:schur_full_rank}, let
\[
  d_{i} = |\tilde{x}_i^{(m)}(t)|,
  \quad
  d_{\min} = \min_{1\le i\le N}d_{i},
  \quad
  d_{\max} = \max_{1\le i\le N}d_{i},
\]
and recall $\mathbf{W}\in\mathbb{S}_{++}^N$ has spectrum
$\lambda_{\min}(\mathbf{W})\le\dots\le\lambda_{\max}(\mathbf{W})$. Then the instantaneous filtered matrix $\boldsymbol{\rho}_t=\mathbf{D}_t\,\mathbf{W}\,\mathbf{D}_t$ is SPD and its condition number satisfies
\[
  \kappa\bigl(\boldsymbol{\rho}_t\bigr)
  =
  \frac{\lambda_{\max}\bigl(\boldsymbol{\rho}_t\bigr)}
       {\lambda_{\min}\bigl(\boldsymbol{\rho}_t\bigr)}
  \le
  \frac{d_{\max}^2}{d_{\min}^2}
  \cdot
  \frac{\lambda_{\max}(\mathbf{W})}{\lambda_{\min}(\mathbf{W})}.
\]
\end{theorem}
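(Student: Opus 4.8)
The plan is to obtain this bound as an immediate corollary of the two-sided spectral estimate already established in Theorem~\ref{thm:amplitude_scaling_bounds}, using the fact that the condition number of a symmetric positive-definite matrix is exactly the ratio of its extreme eigenvalues. So the work reduces to (i) confirming positive-definiteness, (ii) invoking the spectral bounds, and (iii) dividing them.

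First I would record that $\rho_t = D_t\,W\,D_t$ is SPD. This follows directly from Theorem~\ref{thm:schur_full_rank}: since $D_t$ has strictly positive diagonal entries $d_i = |\tilde x_i^{(m)}(t)| > 0$, the matrix $\rho_t$ is congruent to $W \succ 0$ and hence positive definite. In particular all its eigenvalues $\delta_{1,t} \le \dots \le \delta_{N,t}$ are strictly positive, so the condition number is well-defined with $\lambda_{\min}(\rho_t) = \delta_{1,t}$ and $\lambda_{\max}(\rho_t) = \delta_{N,t}$, and
\[
  \kappa(\rho_t) = \frac{\delta_{N,t}}{\delta_{1,t}}.
\]

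Next I would apply Theorem~\ref{thm:amplitude_scaling_bounds} with the identifications $m_t = d_{\min}$ and $M_t = d_{\max}$. That theorem yields, for every index $i$,
\[
  d_{\min}^2\,\lambda_{\min}(W) \le \delta_{i,t} \le d_{\max}^2\,\lambda_{\max}(W).
\]
Specializing to the extremes gives the upper bound $\delta_{N,t} \le d_{\max}^2\,\lambda_{\max}(W)$ on the top eigenvalue and the strictly positive lower bound $\delta_{1,t} \ge d_{\min}^2\,\lambda_{\min}(W) > 0$ on the bottom eigenvalue.

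Finally I would form the ratio. Because both $\delta_{1,t}$ and $d_{\min}^2\,\lambda_{\min}(W)$ are strictly positive, dividing the upper bound on the numerator by the lower bound on the denominator preserves the inequality direction, so
\[
  \kappa(\rho_t) = \frac{\delta_{N,t}}{\delta_{1,t}} \le \frac{d_{\max}^2\,\lambda_{\max}(W)}{d_{\min}^2\,\lambda_{\min}(W)} = \frac{d_{\max}^2}{d_{\min}^2}\cdot\frac{\lambda_{\max}(W)}{\lambda_{\min}(W)}.
\]
There is no substantial obstacle here: the statement is essentially a repackaging of the two-sided spectral bound into a quotient, with positive-definiteness inherited from Theorem~\ref{thm:schur_full_rank}. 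The one point genuinely requiring care is verifying that the denominator is strictly positive before dividing, which is exactly why I flag the hypotheses $W \succ 0$ and $\tilde x_i^{(m)}(t) \neq 0$ for all $i$; without these the lower bound on $\delta_{1,t}$ could degenerate to zero and the ratio would be meaningless.
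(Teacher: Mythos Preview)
Your proposal is correct and matches the paper's own proof: both establish SPD via congruence with $W$ (Theorem~\ref{thm:schur_full_rank}), obtain the two-sided spectral bound $d_{\min}^2\lambda_{\min}(W)\le\delta\le d_{\max}^2\lambda_{\max}(W)$, and divide. The only cosmetic difference is that the paper re-derives this spectral bound inline via the Rayleigh--Ritz argument, whereas you (more economically) cite Theorem~\ref{thm:amplitude_scaling_bounds} directly.
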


\begin{proof}
From Theorem~\ref{thm:schur_full_rank}, $\boldsymbol{\rho}_t^{(m)}$ is congruent to the SPD matrix $\mathbf{W}$, so it remains SPD, hence all eigenvalues are strictly positive and the condition number is well-defined.

From the Rayleigh--Ritz characterization, for any unit vector $\mathbf{v}$,
\begin{align*}
\mathbf{v}^\top \boldsymbol{\rho}_t \mathbf{v}
&= (\mathbf{D}_t\mathbf{v})^\top\, \mathbf{W}\, (\mathbf{D}_t\mathbf{v}) \\
&\in \bigl[\lambda_{\min}(\mathbf{W})\, \|\mathbf{D}_t \mathbf{v}\|^2,\;
           \lambda_{\max}(\mathbf{W})\, \|\mathbf{D}_t \mathbf{v}\|^2\bigr].
\end{align*}

Since $d_{\min} \le d_i \le d_{\max}$ for all $i$, and $\|\mathbf{v}\| = 1$, one checks
\[
  d_{\min}^2 \le\|\mathbf{D}_t\mathbf{v}\|^2\le d_{\max}^2.
\]
Hence every eigenvalue $\delta$ of $\boldsymbol{\rho}_t^{(m)}$ satisfies
\[
  d_{\min}^2\,\lambda_{\min}(\mathbf{W})
  \le
  \delta
  \le
  d_{\max}^2\,\lambda_{\max}(\mathbf{W}).
\]

Writing $\delta_{\min}=\lambda_{\min}(\boldsymbol{\rho}_t)$ and $\delta_{\max}=\lambda_{\max}(\boldsymbol{\rho}_t)$, the above yields
\[
  \delta_{\min}\ge d_{\min}^2\,\lambda_{\min}(\mathbf{W}),
  \quad
  \delta_{\max}\le d_{\max}^2\,\lambda_{\max}(\mathbf{W}).
\]
Therefore
\[
  \kappa\bigl(\boldsymbol{\rho}_t\bigr)
  = \frac{\delta_{\max}}{\delta_{\min}}
  \le
  \frac{d_{\max}^2\,\lambda_{\max}(\mathbf{W})}
       {d_{\min}^2\,\lambda_{\min}(\mathbf{W})},
\]
which completes the proof.
\end{proof}

\label{ssec:gershgorin_bounds}

\begin{figure*}[t]
  \centering
  \includegraphics[width=0.8\textwidth]{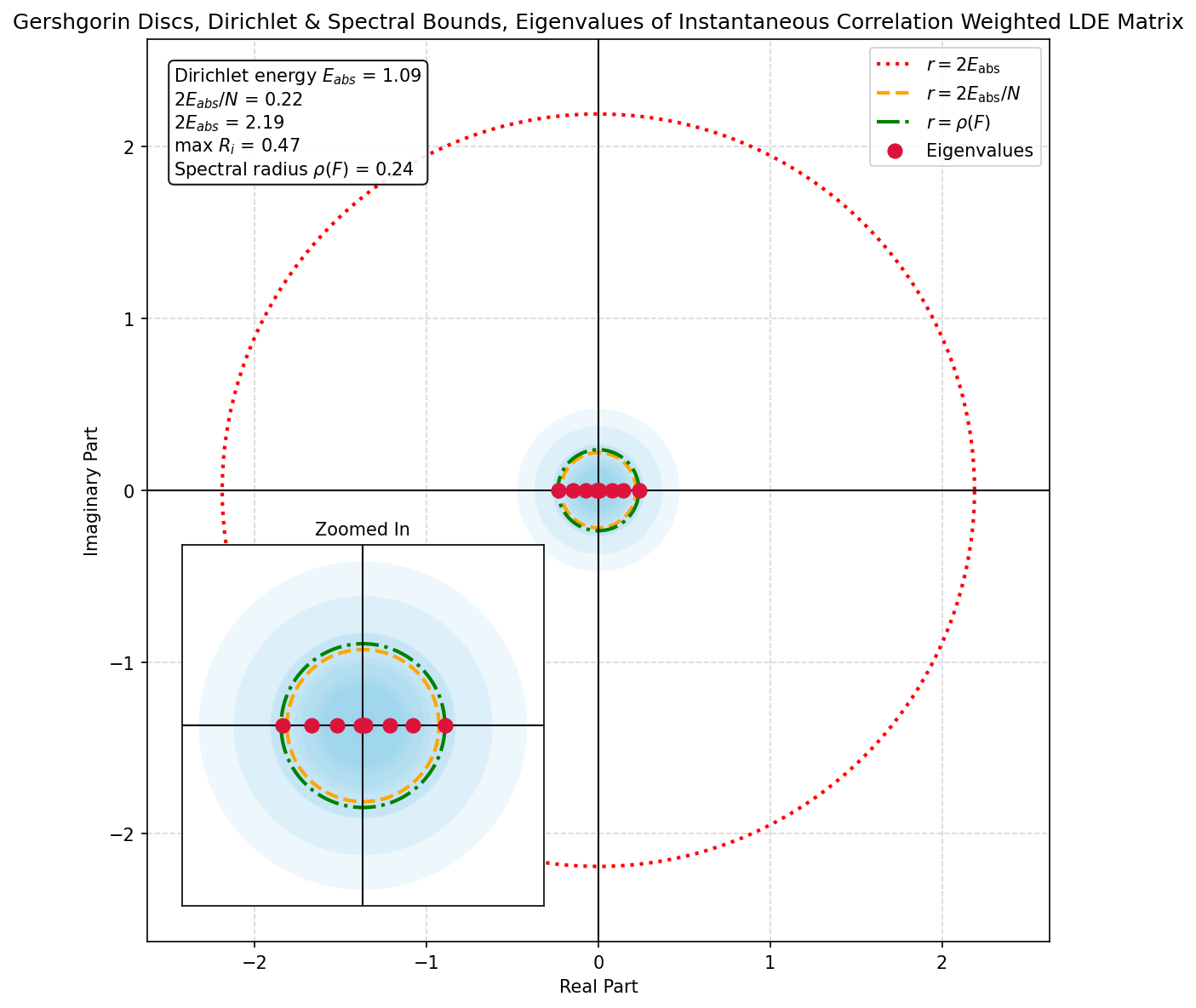}
  \caption{
    \textbf{Main panel:} Light-blue shaded circles show the Gershgorin discs of the Hadamard-weighted matrix $\boldsymbol{\Omega} = \mathbf{W}\circ \mathbf{J}$, each centered at the origin with radius 
    $R_i = \sum_{j\neq i}|w_{ij}|(x_i-x_j)^2$. Red dotted circle marks the upper Dirichlet-energy bound $r=2\mathcal{E}_{\mathrm{abs}}$, orange dashed circle marks the average-energy bound $r=2\mathcal{E}_{\mathrm{abs}}/N$, and green dash--dot circle marks the spectral radius $r=\rho(\boldsymbol{\Omega})$. Red crosses are the eigenvalues of $\boldsymbol{\Omega}$, all lying within the union of the Gershgorin discs.
    \textbf{Inset:} A close-up around the origin shows the small Gershgorin discs, the tight Dirichlet lower bound $2\mathcal{E}_{\mathrm{abs}}/N$, and the spectral-radius circle relative to the cluster of eigenvalues.
  }
  \label{fig:gershgorin_dirichlet_inset}
\end{figure*}
\begin{theorem}[Gershgorin bounds on $\boldsymbol{\rho}_t$]  
\label{thm:gershgorin_instantaneous}
Let $\boldsymbol{\rho}_t\in\mathbb{S}_{++}^N$ as above, and define
\[
  a_{ii} = \rho_{ii} = w_{ii} d_{i,t}^2,
  \qquad
  R_i = \sum_{j\ne i} |w_{ij}| d_{i} d_{j}.
\]
Then every eigenvalue $\delta_i$ of $\boldsymbol{\rho}_t$ satisfies
\[
  \delta_i \in
  \bigcup_{i=1}^N D(a_{ii}, R_i).
\]
In particular, since $\mathbf{W}$ is SPD and its diagonal entries $w_{ii}>0$, each disc lies strictly in the right-half plane and hence $\boldsymbol{\rho}_t$ has all positive eigenvalues.

Moreover, letting
\begin{align*}
  d_{\min} &= \min_i d_{i}, \\
  d_{\max} &= \max_i d_{i}, \\
  r_{\max} &= \max_i\sum_{j\ne i}|w_{ij}|,
\end{align*}
we obtain the simplified bound
\[
  \delta_i \in
  \bigl[d_{\min}^2 w_{\min} - d_{\max}^2 r_{\max},\;
        d_{\max}^2 w_{\max} + d_{\max}^2 r_{\max}\bigr],
\]
where $w_{\min} = \min_i w_{ii}$ and $w_{\max} = \max_i w_{ii}$.
\end{theorem}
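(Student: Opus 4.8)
The plan is to read off the entries of $\rho_t = D_t\,W\,D_t$ and feed them directly into Gershgorin's circle theorem, exactly as was done in the proof of Theorem~2. Since $D_t=\diag(d_1,\dots,d_N)$, the $(i,j)$ entry is $(\rho_t)_{ij}=d_i\,W_{ij}\,d_j$, so the diagonal is $(\rho_t)_{ii}=W_{ii}\,d_i^2=a_{ii}$ and the $i$-th off-diagonal absolute row sum is $\sum_{j\ne i}|d_i\,W_{ij}\,d_j|=d_i\sum_{j\ne i}|W_{ij}|\,d_j=R_i$. Gershgorin then confines every eigenvalue to $\bigcup_i D(a_{ii},R_i)$; because $\rho_t$ is symmetric its spectrum is real, so each complex disc collapses to the real interval $[a_{ii}-R_i,\,a_{ii}+R_i]$, which is precisely the first displayed claim.

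For the positivity assertion I would \emph{not} rely on the discs sitting in the right half-plane, since that would require strict diagonal dominance of $\rho_t$, which positive-definiteness of $W$ alone does not supply. Instead I would reuse the congruence argument from Theorem~1: because every $d_i=|\tilde x_i(t)|>0$, the matrix $D_t$ is invertible, so $\rho_t=D_t\,W\,D_t$ is congruent to $W$, and Sylvester's law of inertia forces $\rho_t$ to inherit the inertia $(N,0,0)$ of $W\in\mathbb S_{++}^N$. Hence $\rho_t$ is SPD and all $\delta_i>0$, independently of where the individual discs lie.

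The simplified two-sided enclosure then follows by bounding centers and radii uniformly in $i$. For the centers I would use $W_{ii}\,d_i^2\in[\,d_{\min}^2\min_k W_{kk},\;d_{\max}^2\max_k W_{kk}\,]$, valid factor-by-factor since $d_i\in[d_{\min},d_{\max}]$ and $W_{ii}\in[\min_k W_{kk},\max_k W_{kk}]$. For the radii I would use $R_i=d_i\sum_{j\ne i}|W_{ij}|\,d_j\le d_{\max}^2\sum_{j\ne i}|W_{ij}|\le d_{\max}^2\,r_{\max}$. Combining, each interval $[a_{ii}-R_i,\,a_{ii}+R_i]$ lies inside $[\,d_{\min}^2\min_k W_{kk}-d_{\max}^2 r_{\max},\;d_{\max}^2\max_k W_{kk}+d_{\max}^2 r_{\max}\,]$, the claimed bound.

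The only genuinely delicate point is the positivity statement: as written, ``each disc lies strictly in the right half-plane'' is stronger than what SPD-ness of $W$ delivers, so the main work is recognizing that positivity must be routed through the congruence/Sylvester argument rather than through Gershgorin. Everything else is a mechanical substitution into a localization theorem already used earlier in the paper.
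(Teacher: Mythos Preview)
Your proposal is correct and, for the Gershgorin localization and the coarse two-sided interval, proceeds exactly as the paper does: compute $(\rho_t)_{ij}=d_i W_{ij} d_j$, read off centers and radii, and then bound $a_{ii}$ and $R_i$ factor-by-factor using $d_i\in[d_{\min},d_{\max}]$ and $\sum_{j\ne i}|W_{ij}|\le r_{\max}$.

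The one substantive difference is the positivity step, and here you are actually more careful than the paper. The paper argues that because $W_{ii}=\sum_k\lambda_k u_{k,i}^2>0$ and $d_i>0$, ``the real parts of all discs lie strictly to the right of zero, proving $\delta_i>0$.'' That inference is exactly the one you flag as unsupported: a positive center alone does not keep a Gershgorin disc in the open right half-plane unless the radius is strictly smaller, i.e.\ unless $\rho_t$ is strictly diagonally dominant, which SPD-ness of $W$ does not guarantee (take $W=\bigl(\begin{smallmatrix}1&0.9\\0.9&1\end{smallmatrix}\bigr)$ with $d_1=1$, $d_2=10$: the first disc is $[-8,10]$). Your rerouting through congruence and Sylvester's law of inertia---$\rho_t=D_tWD_t$ with $D_t$ invertible, hence $\rho_t$ inherits the inertia $(N,0,0)$ of $W$---is the correct way to secure $\delta_i>0$, and it is precisely the mechanism already established in Theorem~\ref{thm:schur_full_rank}. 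So your proof repairs a genuine gap in the paper's version while otherwise following the same line.
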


\begin{proof}
By Gershgorin's circle theorem, each eigenvalue $\delta$ of $\boldsymbol{\rho}=\boldsymbol{\rho}_t$ lies in at least one disc
\[
  \{z:|z-\rho_{ii}|\le\sum_{j\ne i}|\rho_{ij}|\},
\]
where $\rho_{ii}=w_{ii}d_{i}^2$ and $\rho_{ij}=w_{ij}d_{i}d_{j}$. Thus
\[
  |\delta_i - w_{ii}d_{i}^2|\le d_{i}\sum_{j\ne i}|w_{ij}|\,d_{j}.
\]
Since $\mathbf{W}$ is SPD, $w_{ii}=\sum_k\lambda_k u_{k,i}^2>0$ and each $d_{i}>0$. Hence the real parts of all discs lie strictly to the right of zero, proving $\delta_i>0$.

For the coarse bound, note
\[
  w_{ii}\ge\min_i w_{ii},
  \;
  \sum_{j\ne i}|w_{ij}|\le r_{\max},
  \;
  d_{\min}\le d_{i,t}\le d_{\max}.
\]
So every disc collapses to the real interval (as all eigenvalues are real):
\begin{align*}
D\bigl(w_{ii} d_i^2,\;
&d_i \textstyle\sum_{j \ne i} |w_{ij}|\, d_j\bigr) \\
&\subset
\bigl[
  d_{\min}^2 \min_i w_{ii} - d_{\max}^2 r_{\max}, \\
  &\qquad d_{\max}^2 \max_i w_{ii} + d_{\max}^2 r_{\max}
\bigr].
\end{align*}
Therefore all eigenvalues $\delta_i$ lie in the stated interval.
\end{proof}

\begin{proposition}
\label{prop:hadamard_indefinite}
Let $x_1(t),\dots,x_N(t)$ be distinct real numbers,
\[
J_{ij}(t)=(x_i(t) - x_j(t))^2,\quad J_{ii}(t)=0,
\]
and let $\mathbf{W}\in\mathbb{R}^{N\times N}$ be any symmetric matrix with $w_{ij}\neq0$ for all $i\neq j$. Define the Hadamard product
\[
\boldsymbol{\Omega}(t) = \mathbf{J}(t)\circ \mathbf{W},\qquad \Omega_{ij}(t)=J_{ij}(t)\,w_{ij}.
\]
Then $\boldsymbol{\Omega}(t)$ is symmetric and invertible, yet $\mathrm{tr}(\boldsymbol{\Omega}(t))=0$, so $\boldsymbol{\Omega}(t)$ cannot be positive (semi-)definite.
\end{proposition}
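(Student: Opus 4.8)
The plan is to check the three stated properties separately and then assemble the ``cannot be definite'' conclusion from a one-line eigenvalue count. First I would record \textbf{symmetry}: since $(x_i(t)-x_j(t))^2=(x_j(t)-x_i(t))^2$ the squared-difference matrix $J(t)$ is symmetric, and $W$ is symmetric by assumption, so the entrywise product obeys $\Omega_{ij}(t)=J_{ij}(t)\,W_{ij}=J_{ji}(t)\,W_{ji}=\Omega_{ji}(t)$. The \textbf{vanishing trace} is equally immediate from the diagonal convention: $J_{ii}(t)=0$ forces $\Omega_{ii}(t)=J_{ii}(t)\,W_{ii}=0$ for every $i$, hence $\tr(\Omega(t))=\sum_i\Omega_{ii}(t)=0$.

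For \textbf{invertibility} I would not argue from scratch but invoke the rank-lifting result of Appendix~\ref{rankliftlde}: because the $x_i(t)$ are distinct and $W$ has all nonzero off-diagonal entries, $\det(J(t)\circ W)$ is a nonzero polynomial in the off-diagonal entries of $W$, so $\Omega(t)$ is invertible outside a measure-zero hypersurface, i.e.\ generically. This is the only step where the hypotheses (distinctness, full off-diagonal support) do real work; everything else is entrywise bookkeeping.

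Finally, the \textbf{indefiniteness} follows by contradiction. As a real symmetric matrix, $\Omega(t)$ has a real spectrum $\{\delta_i\}$ with $\sum_i\delta_i=\tr(\Omega(t))=0$. Were $\Omega(t)$ positive semidefinite we would have every $\delta_i\ge 0$, and a vanishing sum of nonnegative reals forces $\delta_i=0$ for all $i$; a symmetric matrix with only zero eigenvalues is the zero matrix, contradicting invertibility. Hence $\Omega(t)$ is neither positive semidefinite nor positive definite. I expect no genuine obstacle here: the argument is elementary once invertibility is cited, so the only care needed is to phrase invertibility as the \emph{generic} property inherited from the appendix rather than a universal one. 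It is worth flagging the conceptual contrast with Theorem~\ref{thm:schur_full_rank}, where the instantaneous-correlation profile is \emph{congruent} to $W$ and therefore preserves positive-definiteness; the squared-difference profile instead has a forced zero diagonal, and a zero trace is incompatible with a nonzero definite matrix.
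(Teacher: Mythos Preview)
Your proposal is correct and mirrors the paper's proof essentially step for step: symmetry from the entrywise product of symmetric matrices, zero trace from $J_{ii}(t)=0$, invertibility by citing the rank-lifting lemma in Appendix~\ref{rankliftlde}, and indefiniteness via the trace-equals-sum-of-eigenvalues contradiction. Your explicit caveat that invertibility is only \emph{generic} (outside a measure-zero set of $W$) is actually a sharper statement than the paper's phrasing, and your closing remark contrasting with Theorem~\ref{thm:schur_full_rank} is a nice addition not present in the original.
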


\begin{proof}
Symmetry of $\boldsymbol{\Omega}(t)$ follows immediately from symmetry of $\mathbf{J}(t)$ and $\mathbf{W}$, since
\[
\Omega_{ij}(t)=J_{ij}(t)\,w_{ij}=J_{ji}(t)\,w_{ji}=\Omega_{ji}(t).
\]
Invertibility is guaranteed by the Hadamard rank-lifting argument: because $\mathbf{W}$ has full support, $\det(\mathbf{D}\circ \mathbf{C})\neq0$ for generic such $\mathbf{W}$, hence $\mathrm{rank}(\boldsymbol{\Omega}(t))=N$.

Next, compute the trace:
\[
\mathrm{tr}(\boldsymbol{\Omega}(t))=\sum_{i=1}^N \Omega_{ii}(t)
=\sum_{i=1}^N J_{ii}(t)\,w_{ii}
=\sum_{i=1}^N 0\cdot w_{ii}
=0.
\]
Finally, if $\mathbf{M}$ were positive semidefinite then all its eigenvalues $\{\lambda_k\}$ would satisfy $\lambda_k\ge0$. But their sum is
\[
\sum_{k=1}^N\lambda_k=\mathrm{tr}(\mathbf{M})=0,
\]
forcing each $\lambda_k=0$, contradicting invertibility. Hence $\mathbf{M}$ has both positive and negative eigenvalues and is indefinite.
\end{proof}

\begin{theorem}[Spectral bounds for LDE weighting]
\label{thm:lde_spectral_bounds}
Let $\mathbf{W}\in\mathbb{R}^{N\times N}$ be a real symmetric, full-rank matrix with eigenvalues
\[
\lambda_{\min}(\mathbf{W})\le\cdots\le\lambda_{\max}(\mathbf{W}).
\]
At time $t$, let $\mathbf{x}(t)\in\mathbb{R}^N$ and define the instantaneous squared-difference matrix
\[
J_{ij}(t)=(x_i(t)-x_j(t))^2,\quad J_{ii}(t)=0,
\]
and form the Hadamard-weighted matrix
\[
\boldsymbol{\Omega}(t)=\mathbf{W}\circ \mathbf{J}(t),\quad \Omega_{ij}(t)=w_{ij}\,J_{ij}(t).
\]
Set
\begin{align}
m_t &= \min_{i\neq j}|x_i(t)-x_j(t)|, \\
M_t &= \max_{i\neq j}|x_i(t)-x_j(t)|,
\end{align}
and let $\delta_{1,t}\le\cdots\le\delta_{N,t}$ be the eigenvalues of $\boldsymbol{\Omega}(t)$. Then for each $i=1,\dots,N$,
\[
m_t^2\,\lambda_{\min}(\mathbf{W})\le\delta_{i,t}\le M_t^2\,\lambda_{\max}(\mathbf{W}).
\]
\end{theorem}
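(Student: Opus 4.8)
The plan is to follow the template of the instantaneous-correlation bound (Theorem~\ref{thm:amplitude_scaling_bounds}). Since $\Omega(t)=W\circ J(t)$ is symmetric, the Courant--Fischer min--max characterization reduces the two-sided eigenvalue estimate to a single Rayleigh-quotient bound, namely to showing that $m_t^2\,\lambda_{\min}(W)\le v^\top\Omega(t)\,v\le M_t^2\,\lambda_{\max}(W)$ for every unit vector $v$. In the IC case this was immediate because $J^{\mathrm{IC}}(t)=\tilde x_t\tilde x_t^{\top}$ is rank one, so the Hadamard product collapsed to the \emph{exact} congruence $\Omega^{\mathrm{IC}}(t)=D_t\,W\,D_t$ with $D_t=\diag(|\tilde x_t|)$; one then substitutes $u=D_tv/\|D_tv\|$, applies Rayleigh--Ritz to $W$, and controls $\|D_tv\|^2\in[m_t^2,M_t^2]$. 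So the first thing I would do is search for an analogous factorization of the squared-difference slice.

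The natural decomposition is $J_{ij}(t)=(x_i-x_j)^2=x_i^2+x_j^2-2x_ix_j$, which by bilinearity of the Hadamard product gives
\[
\Omega(t)=\diag(x^2)\,W + W\,\diag(x^2) - 2\,\diag(x)\,W\,\diag(x),
\]
where $x^2$ denotes the entrywise square. The last term $\diag(x)\,W\,\diag(x)$ is exactly the IC-style congruence, so one is tempted to bound its Rayleigh quotient by $M_t^2\lambda_{\max}(W)$ and $m_t^2\lambda_{\min}(W)$ as before, and then handle the two cross terms $\diag(x^2)W+W\diag(x^2)$ separately via Weyl's inequalities on the sum.

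The hard part — and where I expect the argument to genuinely stall — is that, unlike the IC slice, $J(t)$ is rank three (a sum $u\mathbf 1^\top-2vv^\top+\mathbf 1 u^\top$ with $u=x^2$, $v=x$), so $\Omega(t)$ is a \emph{sum} of structured matrices rather than a single positive congruence. The cross-term anticommutator $\diag(x^2)W+W\diag(x^2)$ is not sign-definite, so Weyl only yields crude additive bounds, not the clean multiplicative envelope $[m_t^2\lambda_{\min}(W),\,M_t^2\lambda_{\max}(W)]$. In fact I anticipate the \emph{signed lower} inequality to be the real sticking point: because $\Omega_{ii}(t)=0$ we have $\tr\Omega(t)=0$, so whenever $\Omega(t)\neq0$ it must possess a strictly negative eigenvalue; if $W\succ0$ this already contradicts the claimed bound $\delta_{i,t}\ge m_t^2\lambda_{\min}(W)>0$. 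I therefore expect the statement to require either weakening to a spectral-radius/magnitude bound — consistent with the Gershgorin--Dirichlet estimate $\rho(\Omega(t))\le 2\mathcal E_{\mathrm{abs}}$ of Theorem~2 — or additional structural hypotheses on $W$ (e.g.\ an indefinite support with a matching signature) that force the congruence-style argument through.
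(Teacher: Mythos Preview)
Your diagnosis is correct, and in fact the paper's own proof commits exactly the error you anticipate. The paper writes the Rayleigh quotient as
\[
v^\top\Omega(t)\,v \;=\; \sum_{i,j} W_{ij}\,(x_i-x_j)^2\,v_i v_j,
\]
and then, invoking $m_t^2\le (x_i-x_j)^2\le M_t^2$ for $i\neq j$, asserts
\[
m_t^2\sum_{i,j}W_{ij}v_iv_j \;\le\; v^\top\Omega(t)\,v \;\le\; M_t^2\sum_{i,j}W_{ij}v_iv_j,
\]
after which Rayleigh--Ritz on $v^\top W v$ is applied. Both inequalities in this sandwich are illegitimate: the coefficients $W_{ij}v_iv_j$ carry no fixed sign (even when $W_{ij}\ge0$, the factor $v_iv_j$ can be negative), so the scalar bounds on $(x_i-x_j)^2$ cannot be pulled out term by term. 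The diagonal is also mishandled, since $(x_i-x_i)^2=0\notin[m_t^2,M_t^2]$ whenever $m_t>0$, yet the paper silently passes from $\sum_{i\neq j}$ to $\sum_{i,j}$ to recover $v^\top W v$.

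Your trace observation is the cleanest refutation of the statement itself: $\Omega_{ii}(t)=0$ forces $\tr\Omega(t)=0$, so any nonzero $\Omega(t)$ has a strictly negative eigenvalue, contradicting the lower bound whenever $W\succ0$ and $m_t>0$. For a concrete witness take $N=2$, $W=\left(\begin{smallmatrix}2&1\\[1pt]1&2\end{smallmatrix}\right)$ (so $\lambda_{\min}(W)=1$) and $x=(0,1)$: then $\Omega=\left(\begin{smallmatrix}0&1\\[1pt]1&0\end{smallmatrix}\right)$ has eigenvalues $\pm1$, while the claimed lower bound is $m_t^2\lambda_{\min}(W)=1$. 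Your instinct to retreat to a magnitude bound of the Gershgorin--Dirichlet type is therefore the right repair; the multiplicative envelope stated in the theorem cannot hold as written.
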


\begin{proof}
Let $\mathbf{v}\in\mathbb{R}^N$ be any unit vector, $\|\mathbf{v}\|=1$. The Rayleigh quotient of $\boldsymbol{\Omega}(t)$ at $\mathbf{v}$ is
\[
\mathbf{v}^\top \boldsymbol{\Omega}(t)\,\mathbf{v}
=\sum_{i,j}w_{ij}\,(x_i(t)-x_j(t))^2\,v_i\,v_j.
\]
Since for all $i\neq j$ we have $m_t^2\le(x_i(t)-x_j(t))^2\le M_t^2$, it follows that
\begin{align}
m_t^2\sum_{i,j}w_{ij}v_i v_j
&\le \mathbf{v}^\top \boldsymbol{\Omega}(t)\,\mathbf{v} \notag \\
&\le M_t^2\sum_{i,j}w_{ij}v_i v_j.
\end{align}
But $\sum_{i,j}w_{ij}v_i v_j = \mathbf{v}^\top \mathbf{W}\,\mathbf{v}$, and by the Rayleigh--Ritz theorem
\[
\lambda_{\min}(\mathbf{W})\le \mathbf{v}^\top \mathbf{W}\,\mathbf{v}\le\lambda_{\max}(\mathbf{W}).
\]
Combining these inequalities gives
\[
m_t^2\,\lambda_{\min}(\mathbf{W})\le \mathbf{v}^\top \boldsymbol{\Omega}\,\mathbf{v}
\le M_t^2\,\lambda_{\max}(\mathbf{W}).
\]
Finally, the Courant--Fischer characterization implies that each eigenvalue $\delta_{i,t}$ of $\boldsymbol{\Omega}(t)$ lies within the range of $\mathbf{v}^\top \boldsymbol{\Omega}(t)\,\mathbf{v}$ over unit $\mathbf{v}$. Therefore
\[
m_t^2\,\lambda_{\min}(\mathbf{W})\le\delta_{i,t}\le M_t^2\,\lambda_{\max}(\mathbf{W}),
\]
for $i=1,\dots,N$, as claimed.
\end{proof}

\subsection{Relation to LoRA and HiRA Adapters}
\label{lorahira}

\begin{figure*}[t]
\centering
\resizebox{\linewidth}{!}{%
\begin{tikzpicture}[line cap=round,line join=round,>=Latex,transform shape]

\usetikzlibrary{arrows.meta,positioning}
\tikzset{
  blk/.style={rounded corners=3pt, draw=black, very thick,
              minimum width=48mm, minimum height=10mm, align=center},
  smallblk/.style={rounded corners=3pt, draw=black, very thick,
              minimum width=38mm, minimum height=9mm, align=center},
  note/.style={font=\small},
  mapline/.style={densely dashed, very thick, draw=black!55}
}
\definecolor{cgv}{RGB}{198,224,255}   
\definecolor{ctf}{RGB}{255,236,179}   

\node[font=\bfseries] (gtitle) at (0,7.2) {GVNN (data-driven graph tensor)};
\node[font=\bfseries] (ttitle) at (11.4,7.2) {Transformer (data-driven directed graph)};

\node[blk, fill=cgv] (g1) at (0,5.8)
  {Graph construction from data\\[1pt]
   $\displaystyle \mathbf{J}(t)=F_V\!\big(x_i(t),x_j(t)\big),\quad
   \boldsymbol{\Omega}(t)=\mathbf{W}\circ \mathbf{J}(t)$};

\node[blk, fill=cgv, below=10mm of g1] (g2)
  {Graph convolution (batched)\\[1pt]
   $\displaystyle \mathbf{Z}(t)=\boldsymbol{\Omega}(t)\,\mathbf{X}(t)$};

\node[smallblk, fill=cgv, below=10mm of g2] (g3)
  {Mixing + nonlinearity\\[1pt]
   $\displaystyle \mathbf{Y}(t)=\sigma\!\big(\boldsymbol{\Theta}[\,a_t \mathbf{X}(t)+b_t \mathbf{Z}(t)\,]\big)$};

\draw[->, very thick] (g1.south) -- (g2.north);
\draw[->, very thick] (g2.south) -- (g3.north);

\node[blk, fill=ctf, anchor=west] (t1) at ([xshift=65mm] g1.east)
  {Attention graph from data\\[1pt]
   $\displaystyle \mathbf{Q}=\mathbf{X}\mathbf{W}_Q,\;\mathbf{K}=\mathbf{X}\mathbf{W}_K,\;\mathbf{V}=\mathbf{X}\mathbf{W}_V,\quad
   \mathbf{A}=\mathrm{softmax}\!\left(\frac{\mathbf{Q}\mathbf{K}^\top}{\sqrt{d_k}}\right)$};

\node[blk, fill=ctf, below=10mm of t1] (t2)
  {Aggregation as graph convolution\\[1pt]
   $\displaystyle \mathbf{A}\cdot \mathbf{V}$};

\node[smallblk, fill=ctf, below=10mm of t2] (t3)
  {Residual + nonlinearity\\[1pt] Add \& Norm + FFN};

\draw[->, very thick] (t1.south) -- (t2.north);
\draw[->, very thick] (t2.south) -- (t3.north);

\draw[mapline] (g1.east) -- node[midway, above, note]{data-driven graph} (t1.west);
\draw[mapline] (g2.east) -- node[midway, above, note]{graph convolution} (t2.west);
\draw[mapline] (g3.east) -- node[midway, above, note]{nonlinear mixing / head} (t3.west);

\end{tikzpicture}
}
\caption{Both architectures construct a graph from the input and then convolve over it.
GVNN forms a data-driven adjacency tensor $\boldsymbol{\Omega}(t)=\mathbf{W}\!\circ\!\mathbf{J}(t)$ and performs
$\mathbf{Z}(t)=\boldsymbol{\Omega}(t)\mathbf{X}(t)$ before a learned mixing and nonlinearity
$\mathbf{Y}(t)=\sigma\!\big(\boldsymbol{\Theta}[\,a_t \mathbf{X}(t)+b_t \mathbf{Z}(t)\,]\big)$.
A Transformer builds a directed, data-driven attention graph
$\mathbf{A}=\mathrm{softmax}(\mathbf{Q}\mathbf{K}^\top/\sqrt{d_k})$ and aggregates via $\mathbf{A}\cdot \mathbf{V}$,
followed by residual connections and a feed-forward network.}
\label{fig:data-driven-graphs-clean}
\end{figure*}

Parameter-efficient fine-tuning (PEFT) adapts large models by training only a small number of parameters. 
\textbf{LoRA} \citep{hu2021lora} achieves this by expressing the update as a low-rank factorization, $\Delta \mathbf{W} = \mathbf{A}\mathbf{B}$ with $\mathrm{rank}(\Delta \mathbf{W})\le r$, trading full expressiveness for efficiency. 
\textbf{HiRA} \citep{huang2025hira} increases expressiveness without sacrificing PEFT by applying a Hadamard (elementwise) product between a high-rank base and a low-rank factor: 
\begin{equation}
\Delta \mathbf{W} = \mathbf{W}_0 \odot (\mathbf{A}\mathbf{B}),
\end{equation}
with $\mathrm{rank}(\Delta \mathbf{W})\le \mathrm{rank}(\mathbf{W}_0)\cdot\mathrm{rank}(\mathbf{A}\mathbf{B})$.
This allows the update to attain a much higher effective rank while keeping trainable parameters comparable to LoRA.

\textbf{GVNNs} leverage the same algebraic idea. At each time step, an instantaneous (often low-rank) connectivity $\mathbf{J}_t$ is fused with a stable, typically high-rank support $\mathbf{W}$ via a Hadamard product:
\[
\boldsymbol{\Omega}_t = \mathbf{W} \odot \mathbf{J}_t.
\]
This multiplicative fusion boosts the rank and stabilizes $\boldsymbol{\Omega}_t$, ensuring a more expressive operator even when $\mathbf{J}_t$ is rank-deficient. 

In fact, the support $\mathbf{W}$ need not be fixed. In analogy to LoRA, one can parameterize $\mathbf{W}$ itself as 
\[
\mathbf{W} = \mathbf{W}_{\mathrm{base}} + \Delta \mathbf{W}, 
\quad
\Delta \mathbf{W} = \mathbf{A} \mathbf{B},
\]
where $\mathbf{W}_{\mathrm{base}}$ is an initialization (e.g., long-term correlation) and $\Delta \mathbf{W}$ is a low-rank adapter. 
This formulation enables \emph{efficient adaptation of the support} while avoiding the cost of learning a full $N \times N$ matrix. 

Alternatively, in a HiRA-style design, we may define
\[
\mathbf{W} = \mathbf{W}_{\mathrm{base}} \odot (\mathbf{A}\mathbf{B}),
\]
so that the expressive capacity of the Hadamard product is preserved even when $\mathbf{A}\mathbf{B}$ is low-rank. 

This perspective shows that the Hadamard support in GVNNs can itself be learned using LoRA/HiRA adapters: low-rank updates capture task-specific variations, while the Hadamard structure ensures that these updates interact multiplicatively with instantaneous connectivities $\mathbf{J}_t$. In practice, this allows GVNNs to scale to large graphs without incurring prohibitive parameter costs, while retaining the flexibility to adapt supports across datasets and tasks.

\subsection{Transformers are Graph Variate Neural Networks (and Vice Versa)}
\label{sec:theory-data-driven-graphs}

Recent work has suggested that the transformer model is in fact a graph neural network that has \textit{`won the hardware lottery'}. This suggests that we can, in fact, go the other direction and build better Graph Neural Network architectures by leveraging ideas from the transformer model.

The following discussion will demonstrate that the transformer architecture is in fact not only a Graph Neural Network but in fact a \textit{Graph Variate Neural Network}, i.e.\ one whose core operation is an input dependent graph convolution. In fact, the transformer block can be reinterpreted as a GVNN with a static graph variable tensor (i.e.\ the attention matrix replicated over all $T$) just with differences in normalization and linear weight projections.

\subsubsection{Transformer Self-Attention as Directed Data-Driven Graph Convolution}

Given token features $\mathbf{X} \in \mathbb{R}^{T \times d}$, the Transformer computes
\emph{queries}, \emph{keys}, and \emph{values}
\begin{equation}
\mathbf{Q} = \mathbf{X} \mathbf{W}_Q,\qquad \mathbf{K} = \mathbf{X} \mathbf{W}_K,\qquad \mathbf{V} = \mathbf{X} \mathbf{W}_V,
\end{equation}
then forms a \emph{row-stochastic, directed} attention matrix
\begin{equation}
\label{eq:attn}
\mathbf{A} \;=\; \mathrm{softmax}\!\left(\frac{\mathbf{Q}\mathbf{K}^\top}{\sqrt{d_k}}\right)
\;\in\; \mathbb{R}^{T\times T}\!,
\end{equation}
and aggregates values via
\begin{equation}
\label{eq:attn-agg}
\mathrm{Attn}(\mathbf{X}) \;=\; \mathbf{A}\,\mathbf{V} \;\in\; \mathbb{R}^{T\times d_v}.
\end{equation}
Equations~\eqref{eq:attn}--\eqref{eq:attn-agg} implement
\emph{graph convolution on a data-driven, directed graph} whose adjacency is $\mathbf{A}$:
each row of $\mathbf{A}$ defines outgoing edges from a token to all others with weights
given by the softmax of similarities. Residual connections and a position-wise
feed-forward network complete the encoder block.

\paragraph{Multi-head attention.}
For $H$ heads with $\mathbf{A}^{(h)}$ and $\mathbf{V}^{(h)}$, the aggregation is
$\mathrm{Concat}_h\big(\mathbf{A}^{(h)}\mathbf{V}^{(h)}\big) \mathbf{W}_O$, a parallel sum of
graph convolutions on $H$ distinct data-driven graphs.

\subsubsection{GVNN as Data-Driven Graph Convolution}
GVNN constructs a \emph{graph-variate tensor} via two ingredients:

\begin{enumerate}
\item A \textbf{node-wise similarity/interaction} functional
$F_V:\mathbb{R}\times\mathbb{R}\to\mathbb{R}$ producing
\begin{equation}
\label{eq:J}
J_{ij}(t) \;=\; F_V\!\big(x_i(t),x_j(t)\big) \quad\Rightarrow\quad
\mathbf{J}(t)\in\mathbb{R}^{N\times N}.
\end{equation}
Examples include the LDE and instantaneous correlation.

\item A \textbf{stable support} $\mathbf{W} \in \mathbb{R}^{N \times N}$ (fixed or learned) that
encodes long-term topology or sparsity. GVNN forms the pointwise (Hadamard)
product
\begin{equation}
\label{eq:omega}
\boldsymbol{\Omega}(t) \;=\; \mathbf{W} \circ \mathbf{J}(t)\,,
\end{equation}
which gates/filters instantaneous interactions by the support.
\end{enumerate}

Given $\boldsymbol{\Omega}(t)$, GVNN performs a batched graph convolution of the current
signal:
\begin{equation}
\label{eq:gvnn-conv}
\mathbf{z}(t) \;=\; \boldsymbol{\Omega}(t)\,\mathbf{x}(t) \;\in\; \mathbb{R}^{N}.
\end{equation}
A compact GVNN layer then mixes the original and aggregated signals followed by
a nonlinearity:
\begin{equation}
\label{eq:gvnn-layer}
\mathbf{y}(t) \;=\; \sigma\!\Big(\boldsymbol{\Theta}\,[\,a_t\,\mathbf{x}(t) \;+\; b_t\,\mathbf{z}(t)\,]\Big),
\end{equation}
where $\boldsymbol{\Theta} \in \mathbb{R}^{N\times N}$ is a learned linear map (or small MLP),
and $a_t,b_t$ are (optionally learned) scalar/broadcast coefficients. Stacking
$L$ layers yields $\mathbf{h}^{(l)}(t)$ with $\mathbf{h}^{(0)}(t)=\mathbf{x}(t)$ and
\[
\boldsymbol{\Omega}^{(l)}(t)=\mathbf{W}\circ \mathbf{J}^{(l)}(t),\qquad
J^{(l)}_{ij}(t)=F_V\!\big(h_i^{(l-1)}(t),h_j^{(l-1)}(t)\big).
\]

\paragraph{Multi-node function convolution.}
Similar to multi-head attention, one may aggregate convolutions with different node functions and stable supports.

\ifCLASSOPTIONcaptionsoff
  \newpage
\fi

\end{document}